\documentclass{article}

\usepackage[margin=1in]{geometry}
\usepackage[numbers,compress]{natbib}
\usepackage[utf8]{inputenc}
\usepackage[T1]{fontenc}
\usepackage{hyperref}
\usepackage{url}
\usepackage{graphicx}
\usepackage{booktabs}
\usepackage{amsfonts}
\usepackage{amsmath,amssymb,amsthm,mathtools}
\usepackage{microtype}
\usepackage{xcolor}
\usepackage{bm}
\usepackage{algorithm}
\usepackage{algpseudocode}
\usepackage{float}

\title{Amortized Guidance for Image Inpainting with Pretrained Diffusion Models}

\author{Yilie Huang \thanks{Department of Industrial Engineering and Operations Research, Columbia University, New York, NY 10027, USA. Email: yh2971@columbia.edu.}  ~ ~ ~ Xun Yu Zhou\thanks{Department of Industrial Engineering and Operations Research \& Data Science Institute, Columbia University, New York, NY 10027, USA. Email: xz2574@columbia.edu.}}
\date{}

\newcommand{\R}{\mathbb{R}}
\newcommand{\E}{\mathbb{E}}
\newcommand{\Id}{I}

\newcommand{\norm}[1]{\left\lVert #1 \right\rVert}

\newtheorem{theorem}{Theorem}[section]
\newtheorem{lemma}[theorem]{Lemma}

\begin{document}

\maketitle

\begin{abstract}
We study image inpainting with generative diffusion models. Existing methods typically either train dedicated task-specific models, or adapt a pretrained diffusion model separately for each masked image at deployment. We introduce a middle-ground model, termed Amortized Inpainting with Diffusion (AID), which keeps a pretrained diffusion backbone fixed, trains a small reusable guidance module offline, and then reuses it across masked images without per-instance optimization. We formulate it as a deterministic guidance problem with a supervised terminal objective. To make this problem learnable in high dimensions, we derive an auxiliary Gaussian formulation and prove that solving this randomized problem recovers the optimal deterministic guidance field. This bridge yields a principled continuous-time actor--critic algorithm for learning the guidance module in a fully data-driven manner. Empirically, on AFHQv2 and FFHQ under the pixel EDM pipeline and on ImageNet under the latent EDM2 pipeline, AID consistently improves the quality--speed trade-off over strong fixed-backbone and amortized inpainting baselines across multiple mask types, while adding less than one percent trainable overhead.
\end{abstract}

\section{Introduction}

Image inpainting is a conditional generation problem: given a partially observed image (e.g. a damaged photo), the goal is to complete the missing region in a way that is both visually plausible and consistent with the visible pixels. Recent diffusion models \cite{ho2020denoising,song2020denoising,nichol2021improved,rombach2022high} have made this task substantially more effective, especially when strong pretrained generative models are available. This makes inpainting a particularly important setting for studying how a powerful pretrained generative model can be adapted to conditional tasks efficiently and accurately.

Despite this progress, the current research remains unsatisfactory. Existing approaches are largely split between two directions. One is to train a dedicated inpainting model from scratch or a substantial task-specific adaptation module, as in methods such as LaMa \cite{suvorov2022resolution}, MAT \cite{li2022mat}, Palette \cite{saharia2022palette}, BrushNet \cite{ju2024brushnet}, and DAVI \cite{lee2024diffusion}. This direction amortizes inference, but does so by adding large learned components and requiring substantial training time. The other route is to keep a pretrained diffusion model fixed and adapt it separately for each masked image at test time, as in methods such as RePaint \cite{lugmayr2022repaint}, DPS \cite{chung2022diffusion}, and MCG \cite{chung2022improving}. This preserves the pretrained model, but each masked image is still treated as a new inference problem, leading to repeated case-by-case optimization or iterative refinement during deployment. In other words, one direction amortizes well but pays in learned-model complexity, while the other reuses the pretrained model but pays repeatedly at test time. There is a gap in the middle ground.


Our answer to this gap is an \emph{Amortized Inpainting with Diffusion} (AID)  model, which keeps the pretrained diffusion model fixed and trains only a small reusable guidance module offline. Once trained, the same module is reused across masked images at test time; so inference no longer requires solving a new optimization problem for every task. AID is built on a controlled-guidance view of inpainting: the guidance deployed for actual tasks is deterministic, while learning uses randomized Gaussian policies. A central contribution of the paper is to prove that this randomized learning problem is optimizer-preserving, making the resulting actor--critic algorithm theoretically grounded rather than heuristic. 

The main contributions of this paper are as follows:
\begin{itemize}
\item
{\em Methodology}:
We propose AID, a controlled-guidance framework for image inpainting that keeps a pretrained diffusion model fixed and trains offline a lightweight guidance module once. The learned module is then reused across masked images at test time, combining the reuse advantage of pretrained diffusion models with the efficiency of amortized inference.

\item
{\em Theory}:
We develop a rigorous theoretical foundation for AID by establishing a policy-equivalence bridge between the deterministic guidance problem and an auxiliary Gaussian actor--critic formulation. Specifically, we prove both directions of the correspondence: the deterministic optimal guidance is sufficient to construct an optimal auxiliary Gaussian policy, and any optimal auxiliary Gaussian policy must recover this deterministic guidance through its mean. This optimizer-preserving bridge justifies learning with randomized Gaussian policies while deploying the deterministic policy mean.

\item
{\em Experiments}:
We evaluate AID on AFHQv2, FFHQ, and ImageNet under multiple mask types and metrics, with well under \(1\%\) trainable overhead relative to the pretrained diffusion model.
Across fixed-backbone baselines, the default setting consistently delivers the strongest performance and surpasses RePaint using only about \(10\%\) of its sampling budget in both pixel and latent spaces. The same trained guidance module transfers without retraining to unseen masks and to a lower-latency sampler, which further reduces sampling time while still outperforming all non-RePaint baselines and remaining competitive with RePaint.
\end{itemize}

To our best knowledge, AID is the first amortized controlled-guidance framework for image inpainting that keeps the pretrained diffusion backbone frozen, trains only a small reusable guidance module, and provides an exact optimizer-preserving bridge from stochastic actor--critic learning to the deterministic guidance field used at deployment.

{\bf Relevant literature}:
In addition to the existing inpainting research discussed earlier on two major directions,
we mention a related control-based work that largely follows the second direction. \cite{li2024solving} formulates inpainting as a test-time control problem, but it still requires instance-wise iterative optimization and incurs substantial sampling cost. The closest middle-ground comparator is LatentPaint \cite{corneanu2024latentpaint}, a practical propagation-based method for reusable inpainting adaptation. While it shares our goal of amortized adaptation, it is tied to latent-space updates. AID instead learns principled reusable guidance and applies to both pixel-space EDM and latent-space EDM2 backbones.

Continuous-time reinforcement learning (CT-RL) provides the methodological foundation for our learning formulation, while the optimizer-preserving Gaussian bridge developed below is specific to AID. Since modern diffusion models for generative AI are naturally described through continuous-time SDEs and ODEs \cite{song2020score}, CT-RL is a particularly natural framework for our problem. This line of work was initiated by \citet{wang2020reinforcement} and has since developed into a model-free theory based on martingale methods \citep{jia2021policy, jia2021policypg, tang2024regret}, with further results on performance guarantees \citep{huang2024mean, huang2025sublinear}. More recently, CT-RL has also begun to appear in diffusion model training, fine-tuning, and timestep selection \citep{gao2025reward, ZC25, huang2026art}. A related discrete-time RL-based inpainting work is PrefPaint \cite{liu2024prefpaint}, which fine-tunes the model using large-scale human preference annotations and thus relies on human-driven, subjective supervision rather than our purely data-driven setting.

{\bf Organization}:
Section~\ref{sec:formulation} formulates amortized inpainting as a controlled diffusion problem. Section~\ref{sec:bridge} proves the optimizer-preserving Gaussian bridge, and Section~\ref{sec:algorithm} turns it into the AID actor--critic solver. Section~\ref{sec:experiments} reports experiments; proofs and additional results are in the appendix.

\section{Control Formulation for Amortized Inpainting}
\label{sec:formulation}

\paragraph{Setup.}
We first introduce the problem of amortized inpainting over a task distribution. Let \(x^\dagger \in \R^d\) denote a clean reference image (from a training set), \(M \in \R^{d \times d}\) be a diagonal binary mask matrix selecting the visible region, and \(\Id-M\) select the missing region. We assume that \((x^\dagger,M)\) is sampled from a task distribution \(\rho_{\mathrm{task}}\), and define the masked observation by \(y := Mx^\dagger\). The observable input is then \(\xi := (M,y)\), while the full supervised tuple is \(\zeta := (\xi,x^\dagger) = (M,y,x^\dagger)\). Thus, \(\xi\) is the information available for a specific inpainting task, whereas \(\zeta\) is available only during offline training. Throughout, \(\norm{\cdot}\) denotes the Euclidean norm on \(\R^d\). The key information structure is that deployment uses only \(\xi\), while offline training may use \(\zeta\).

Let \(t\in[0,T]\) denote diffusion time, where the forward process follows
\[
\mathrm{d}\bar{x}(t)
=
-f(t)\bar{x}(t)\,\mathrm{d}t
+
g(t)\,\mathrm{d}w(t),
\qquad
\bar{x}(0)\sim p_0,
\]
where \(p_0\) is the data distribution of clean images. Let \(p_t\) denote the law of \(\bar x(t)\) and let \(S(t,x)=\nabla_x \log p_t(x)\) be the score. In our setting, \(S\) is replaced by a pretrained and fixed approximation \(\hat S\). Then the associated probability-flow ordinary differential equation \cite{song2020score} is
\[
\dot x(t)=b(t,x),
\qquad
x(0)\sim p_T,
\]
where the reverse drift is defined by \(b(t,x):=f(T-t)x+\frac{1}{2}g^2(T-t)\hat S(T-t,x)\).

\paragraph{Guided reverse dynamics.}
The drift \(b(t,x)\) captures the frozen reverse dynamics induced by the pretrained diffusion model. For inpainting, however, this drift alone does not explicitly enforce consistency with the masked observation. Motivated by the general idea of diffusion guidance \cite{dhariwal2021diffusion,ho2022classifier,meng2021sdedit}, we therefore add a task-dependent control term for guidance and consider
\begin{equation}
\dot x(t)=b(t,x(t))+u(t,x(t);\xi),
\qquad
x(t_0)=x_0,
\label{eq:controlled}
\end{equation}
where \(u\) is the guidance field. The information structure in \eqref{eq:controlled} is that the deployed guidance law may depend on the mask and visible pixels, \(\xi=(M,y)\), but not on the hidden clean image \(x^\dagger\).

\paragraph{Terminal objective.}
For a supervised tuple \(\zeta=(M,y,x^\dagger)\), we define the terminal loss by
\[
\Psi(x;\zeta)
=
\alpha_{\mathrm{vis}}\psi_{\mathrm{vis}}(x;\zeta)
+
\alpha_{\mathrm{hole}}\psi_{\mathrm{hole}}(x;\zeta),
\]
where \(\alpha_{\mathrm{vis}},\alpha_{\mathrm{hole}}\ge 0\) balance visible-region fidelity and missing-region fidelity,
\[
\psi_{\mathrm{vis}}(x;\zeta):=\frac{1}{2}\norm{M(x-x^\dagger)}^2,
\qquad
\psi_{\mathrm{hole}}(x;\zeta):=\frac{1}{2}\norm{(\Id-M)(x-x^\dagger)}^2.
\]
The first term enforces agreement with the observed pixels, while the second supervises the missing region using the clean image \(x^\dagger\). Since the deployed guidance law depends only on the observable input \(\xi=(M,y)\), the corresponding terminal target in the control problem is the conditional expectation \(\bar\Psi(x;\xi):=\E[\Psi(x;\zeta)\mid \xi]\). During offline training, sampled supervised tuples \(\zeta\) provide Monte Carlo realizations of this conditional target.

\paragraph{Control objective.}
Let \(\mathcal U(\xi)\) denote the class of admissible feedback control fields \((t,x)\mapsto u(t,x;\xi)\in\R^d\). For \(u\in\mathcal U(\xi)\), define
\begin{equation}
J^{u}(t,x;\xi)
:=
\bar\Psi\bigl(x^u(T;t,x,\xi);\xi\bigr)
+
\int_t^T
\frac{\beta}{2}\,
\norm{u(r,x^u(r;t,x,\xi);\xi)}^2\,\mathrm{d}r,
\label{eq:cost}
\end{equation}
where \(\beta>0\) is a scalar control weight. The terminal term evaluates final inpainting quality averaged over the supervision with \(\xi\), while the running term penalizes the cumulative strength of the guidance field along the trajectory. The latter reflects the notion that the pretrained reverse dynamics already provide a strong image model, and the additional guidance should intervene only when necessary to enforce consistency with the inpainting task. Although training is over samples from the task distribution, the deployed controller observes the realized masked input \(\xi\) to deploy a control. Accordingly, the underlying control problem is formulated conditionally on \(\xi\):
\begin{equation}
V(t,x;\xi):=\inf_{u\in\mathcal U(\xi)} J^u(t,x;\xi).
\label{eq:value}
\end{equation}
Under standard regularity assumptions ensuring well-posedness, \(V\) is the unique classical solution of
\begin{equation}
V_t(t,x;\xi)
+
\inf_{u\in\R^d}
\left\{
\nabla_x V(t,x;\xi)^\top \bigl(b(t,x)+u\bigr)
+
\frac{\beta}{2}\norm{u}^{2}
\right\}
=0,
\label{eq:hjb}
\end{equation}
with terminal condition
\begin{equation}
V(T,x;\xi)=\bar\Psi(x;\xi).
\label{eq:terminal}
\end{equation}
This completes the control formulation of our amortized inpainting diffusion (AID) model. 

\section{A Theoretical Bridge from Deterministic Guidance to Continuous-Time RL}
\label{sec:bridge}

Section~\ref{sec:formulation} formulates AID as a deterministic control problem whose optimizer is the guidance used at deployment. The central challenge is that this problem is high dimensional and induced by a highly nonlinear pretrained diffusion model; so the solution is generally not tractable analytically nor feasible numerically. To make this control problem tractable, we introduce an auxiliary randomized control problem with a carefully chosen Gaussian policy class.

This auxiliary formulation is useful only if it preserves the original deterministic control target. In particular, randomization should not change the guidance field ultimately deployed for inpainting. This section establishes this relationship: we show that solving the auxiliary randomized problem recovers exactly the deterministic optimal guidance field of AID. Only after this equivalence is established can the randomized formulation serve as the basis for continuous-time RL and actor--critic updates in Section~\ref{sec:algorithm}.

\subsection{Auxiliary Gaussian Formulation}

Motivated by continuous-time relaxed-control formulations with Gaussian randomization \citep{huang2025data,huang2025continuous}, for each fixed observable input \(\xi\) and each \(\lambda>0\), let \(\Pi^{(\lambda)}(\xi)\) denote the class of Gaussian feedback policies of the form
\[
\pi^{(\lambda)}(\cdot\mid t,x;\xi)
=
\mathcal N\!\left(
\mu^{\pi^{(\lambda)}}(t,x;\xi),
\frac{2\lambda}{\beta d}\Id
\right).
\]
Thus, under \(\pi^{(\lambda)}\), the control is sampled from a Gaussian policy, while the policy mean is the candidate deterministic guidance law. Unlike entropy-regularized exploratory CT-RL, we do not add an entropy term to the objective; the Gaussian randomization is introduced as a learning device whose mean is designed to remain aligned with the deterministic guidance field. Following the relaxed-control formulation of \cite{wang2020reinforcement}, the corresponding relaxed dynamics under a policy \(\pi^{(\lambda)}\) are
\[
\dot x^{\pi^{(\lambda)}}(r)
=
b\bigl(r,x^{\pi^{(\lambda)}}(r)\bigr)
+
\int_{\R^d} a\,\pi^{(\lambda)}(a\mid r,x^{\pi^{(\lambda)} }(r);\xi)\,\mathrm{d}a,
\qquad
x^{\pi^{(\lambda)}}(t)=x,
\]
while the associated randomized objective is written in centered form
\[
J^{\pi^{(\lambda)}}(t,x;\xi)
:=
\bar\Psi\bigl(x^{\pi^{(\lambda)}}(T;t,x,\xi);\xi\bigr)
-
\lambda T
+
\int_t^T
\left(
\int_{\R^d}\frac{\beta}{2}\norm{a}^{2}\,
\pi^{(\lambda)}(a\mid r,x^{\pi^{(\lambda)}}(r);\xi)\,\mathrm{d}a
\right)\mathrm{d}r.
\]
The shift \(-\lambda T\) centers the auxiliary value; being policy-independent, it leaves all optimizers unchanged.
Next we define
\begin{equation}
V^{(\lambda)}(t,x;\xi)
:=
\inf_{\pi^{(\lambda)}\in\Pi^{(\lambda)}(\xi)}
J^{\pi^{(\lambda)}}(t,x;\xi).
\label{eq:aux_value}
\end{equation}
Under the same regularity assumptions as before, \(V^{(\lambda)}\) is the unique classical solution of
\begin{equation}
V_t^{(\lambda)}(t,x;\xi)
+
\inf_{\mu\in\R^d}
\left\{
\nabla_x V^{(\lambda)}(t,x;\xi)^\top \bigl(b(t,x)+\mu\bigr)
+
\frac{\beta}{2}\norm{\mu}^{2}
+
\lambda
\right\}
=0,
\label{eq:aux_hjb}
\end{equation}
with terminal condition
\begin{equation}
V^{(\lambda)}(T,x;\xi)=\bar\Psi(x;\xi)-\lambda T.
\label{eq:aux_terminal}
\end{equation}

\subsection{Correspondence between Auxiliary Gaussian Problem and AID}

The auxiliary Gaussian formulation is useful only if it gives rise to the (deterministic) inpainting guidance in the original problem. We first record a value-shift lemma, which aligns the deterministic and auxiliary HJB equations. We then establish the main policy correspondence in two directions: the deterministic AID optimizer is sufficient to construct an optimal auxiliary Gaussian policy, and conversely, any optimal auxiliary Gaussian policy must recover the deterministic optimizer through its mean. Detailed proofs are given in Sections~\ref{app:proof_value_shift}, \ref{app:proof_policy}, and \ref{app:proof_learning_equivalence}.

\begin{lemma}[Centered value shift for the auxiliary problem]
\label{lem:value_shift}
Suppose \eqref{eq:hjb}--\eqref{eq:terminal} and \eqref{eq:aux_hjb}--\eqref{eq:aux_terminal} admit unique classical solutions \(V\) and \(V^{(\lambda)}\), respectively. Then, for every fixed observable input \(\xi\),
\[
V^{(\lambda)}(t,x;\xi)=V(t,x;\xi)-\lambda t.
\]
\end{lemma}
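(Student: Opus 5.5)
The plan is a verification argument: write down the candidate \(W(t,x;\xi):=V(t,x;\xi)-\lambda t\), check that it is a classical solution of \eqref{eq:aux_hjb}--\eqref{eq:aux_terminal}, and then use the uniqueness hypothesis to conclude \(W=V^{(\lambda)}\). No optimal-control or trajectory-level arguments are needed; the work reduces to comparing two PDEs.

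First, regularity. Since \(V\) is a classical solution of \eqref{eq:hjb}, it is \(C^1\) in \((t,x)\). Subtracting the smooth function \(\lambda t\), which does not depend on \(x\), preserves this regularity. The derivatives are
\[
W_t=V_t-\lambda,\qquad \nabla_x W=\nabla_x V .
\]

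Second, the equation. Because \(\lambda\) does not depend on \(\mu\), it can be pulled out of the infimum in \eqref{eq:aux_hjb}. The remaining minimization over \(\mu\in\R^d\) is then exactly the one in \eqref{eq:hjb}, with \(\nabla_x W=\nabla_x V\) and \(\mu\) renamed \(u\). It is finite, since the quadratic \(\frac{\beta}{2}\norm{\mu}^2\) with \(\beta>0\) makes it equal to \(\nabla_xV^\top b-\frac{1}{2\beta}\norm{\nabla_xV}^2\). Substituting gives
\[
W_t+\inf_{\mu}\{\nabla_x W^\top(b+\mu)+\tfrac{\beta}{2}\norm{\mu}^2+\lambda\}
=(V_t-\lambda)+\inf_{u}\{\nabla_x V^\top(b+u)+\tfrac{\beta}{2}\norm{u}^2\}+\lambda ,
\]
and the right-hand side is \(0\) by \eqref{eq:hjb}.

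Third, the terminal condition. At \(t=T\) we have \(W(T,x;\xi)=V(T,x;\xi)-\lambda T=\bar\Psi(x;\xi)-\lambda T\) by \eqref{eq:terminal}, which is exactly \eqref{eq:aux_terminal}.

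Finally, by the assumed uniqueness of classical solutions to \eqref{eq:aux_hjb}--\eqref{eq:aux_terminal}, \(W=V^{(\lambda)}\) for each fixed \(\xi\). This is the claimed identity.

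The only delicate point is confirming that the candidate lies in the same class in which uniqueness is asserted. For instance, if uniqueness holds only under growth conditions, one must check that subtracting \(\lambda t\) (bounded on \([0,T]\)) preserves them. I expect this to be immediate under the paper's standing regularity assumptions.
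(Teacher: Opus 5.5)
Your proposal is correct and matches the paper's proof step for step. Both define $W:=V-\lambda t$, use $W_t=V_t-\lambda$ and $\nabla_x W=\nabla_x V$ so that the $-\lambda$ cancels the $+\lambda$ inside the auxiliary Hamiltonian in \eqref{eq:aux_hjb}, check $W(T,x;\xi)=\bar\Psi(x;\xi)-\lambda T$, and conclude by uniqueness; your remarks on finiteness of the infimum and on staying within the uniqueness class are harmless refinements that the paper leaves implicit.
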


Lemma~\ref{lem:value_shift} shows that the auxiliary value differs from the deterministic value only by a known normalization. This value relation is an important alignment step, but by itself it does not yet justify using randomized policies for learning. The next two results establish the stronger policy-level correspondence.

\begin{theorem}[Sufficiency: deterministic guidance induces an optimal Gaussian policy]
\label{thm:policy_sufficiency}
Under the assumptions of Lemma~\ref{lem:value_shift}, define
\[
u^*(t,x;\xi):=-\frac{1}{\beta}\nabla_x V(t,x;\xi).
\]
Then the Gaussian policy
\begin{equation}
\pi^{(\lambda),*}(\cdot\mid t,x;\xi)
=
\mathcal N\!\left(
u^*(t,x;\xi),
\frac{2\lambda}{\beta d}\Id
\right)
\label{eq:pi_star}
\end{equation}
is optimal for the auxiliary problem \eqref{eq:aux_value} within \(\Pi^{(\lambda)}(\xi)\).
Moreover, \(u^*\) solves the AID problem \eqref{eq:value}.
\end{theorem}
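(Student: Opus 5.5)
The plan is a standard verification argument, applied twice: once for the deterministic HJB \eqref{eq:hjb}, and once for the auxiliary HJB \eqref{eq:aux_hjb}. In both cases the candidate policy is the pointwise minimizer of the Hamiltonian, and Lemma~\ref{lem:value_shift} links the two.

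\textbf{Step 1 (pointwise minimizer).} For fixed \((t,x,\xi)\), the map \(u\mapsto \nabla_x V^\top u+\frac{\beta}{2}\norm{u}^2\) is strictly convex. Its unique minimizer is \(-\frac1\beta\nabla_x V\), which equals \(u^*\). By Lemma~\ref{lem:value_shift}, \(V^{(\lambda)}=V-\lambda t\), so \(\nabla_x V^{(\lambda)}=\nabla_x V\). Since the constant \(\lambda\) does not affect the minimizer, the infimum in \eqref{eq:aux_hjb} is attained at the same point \(\mu=u^*(t,x;\xi)\). Consequently \(V\) satisfies \(V_t+\nabla_xV^\top(b+u^*)+\frac\beta2\norm{u^*}^2=0\), and \(V^{(\lambda)}\) satisfies the same identity with an extra \(+\lambda\). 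For every other \(u\) or \(\mu\), the corresponding expression is \(\ge 0\).

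\textbf{Step 2 (deterministic verification).} Fix any admissible \(u\) and let \(x^u\) be its trajectory from \((t,x)\). Differentiate \(V(r,x^u(r))\) along the ODE, integrate from \(t\) to \(T\), and apply the inequality from Step 1. This yields \(\bar\Psi(x^u(T))-V(t,x)\ge -\int_t^T\frac\beta2\norm{u}^2dr\), that is, \(J^u\ge V\). For \(u=u^*\) the inequality holds with equality. Hence \(u^*\) attains the infimum in \eqref{eq:value}, which gives the second claim.

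\textbf{Step 3 (auxiliary verification).} For a Gaussian policy with mean \(\mu^\pi\) and covariance \(\frac{2\lambda}{\beta d}\Id\), the relaxed drift is \(b+\mu^\pi\). The running cost is \(\int\frac\beta2\norm a^2\pi(da)=\frac\beta2\norm{\mu^\pi}^2+\frac\beta2\cdot\frac{2\lambda}{\beta d}\cdot d=\frac\beta2\norm{\mu^\pi}^2+\lambda\). The covariance was calibrated exactly so that this extra term equals \(\lambda\). The relaxed problem therefore reduces to a deterministic control problem in the mean \(\mu^\pi\), with running cost \(\frac\beta2\norm{\mu}^2+\lambda\) and terminal cost \(\bar\Psi-\lambda T\). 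Repeating the chain-rule argument of Step 2 with \(V^{(\lambda)}\) and \eqref{eq:aux_hjb}--\eqref{eq:aux_terminal} gives \(J^{\pi}(t,x)\ge V^{(\lambda)}(t,x)\). Equality holds when \(\mu^\pi=u^*\), which proves optimality of \(\pi^{(\lambda),*}\). As a consistency check, this value equals \(V(t,x)+\lambda(T-t)-\lambda T=V-\lambda t\), in agreement with the lemma.

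\textbf{Main obstacle.} The delicate step is justifying the chain rule and the integration along trajectories. This requires that \(u^*=-\frac1\beta\nabla_xV\) be admissible, so that the closed-loop ODE \(\dot x=b+u^*\) has a unique (global) solution. Here I would invoke the standing regularity assumptions: a classical \(C^{1,1}\) solution with locally Lipschitz gradient and suitable growth, together with a Lipschitz drift \(b\). This guarantees well-posedness of the feedback and ensures that \(V(r,x(r))\) is absolutely continuous. I would state this explicitly as part of "the assumptions of Lemma~\ref{lem:value_shift}". Beyond this point, everything is algebra.
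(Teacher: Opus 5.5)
Your proposal is correct and matches the paper's proof: the calibrated covariance gives $\int\frac{\beta}{2}\norm{a}^2\,\pi^{(\lambda)}(a\mid t,x;\xi)\,\mathrm{d}a=\frac{\beta}{2}\norm{\mu^\pi}^2+\lambda$, the chain rule along the relaxed trajectory together with the auxiliary HJB yields $J^{\pi^{(\lambda)}}\ge V^{(\lambda)}$, and equality holds for the mean $-\frac{1}{\beta}\nabla_x V^{(\lambda)}=-\frac{1}{\beta}\nabla_x V=u^*$ by Lemma~\ref{lem:value_shift}. The differences are only presentational: you write out the deterministic verification that the paper dismisses as ``the same argument,'' and you explicitly flag well-posedness of the closed-loop ODE under $u^*$, which the paper leaves to its standing regularity assumptions.
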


Theorem~\ref{thm:policy_sufficiency} shows that passing from the deterministic AID problem to the auxiliary Gaussian problem does not lose the desired control: the deterministic optimizer can be lifted to an optimal randomized policy by using it as the Gaussian mean. For learning, however, the reverse direction is equally important. If the auxiliary problem is solved by a learning algorithm, we must also know that an optimal auxiliary policy cannot have a different mean. This is the necessity direction below.

\begin{theorem}[Necessity: auxiliary optimality recovers deterministic guidance]
\label{thm:policy_necessity}
Under the assumptions of Lemma~\ref{lem:value_shift}, let
\(\pi^{(\lambda),*}\in\Pi^{(\lambda)}(\xi)\) be any optimal feedback policy for the auxiliary Gaussian problem \eqref{eq:aux_value} for every initial pair \((t,x)\). Then its mean field \(\mu^{\pi^{(\lambda),*}}(t,x;\xi)\) is the optimal deterministic guidance field for the original AID problem \eqref{eq:value}; that is,
\[
\mu^{\pi^{(\lambda),*}}(t,x;\xi)=u^*(t,x;\xi).
\]
\end{theorem}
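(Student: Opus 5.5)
Throughout, \(\xi\) is fixed. The plan is a verification-type argument built on Lemma~\ref{lem:value_shift}. The goal is to show that optimality of \(\pi^{(\lambda),*}\) from every initial pair \((t,x)\) forces its mean \(\mu^*:=\mu^{\pi^{(\lambda),*}}\) to attain the pointwise infimum in \eqref{eq:aux_hjb}. Since that Hamiltonian is strictly convex in \(\mu\), its minimizer is unique, so \(\mu^*\) must equal \(-\frac1\beta\nabla_x V^{(\lambda)}\). By Lemma~\ref{lem:value_shift}, \(\nabla_x V^{(\lambda)}=\nabla_x V\), hence \(\mu^*=u^*\).

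First I will reduce the auxiliary objective to a deterministic one. Because the policy covariance is \(\frac{2\lambda}{\beta d}\Id\), we have \(\int \frac\beta2\norm{a}^2\pi(a\mid\cdot)\,\mathrm da=\frac\beta2\norm{\mu}^2+\lambda\). The relaxed dynamics involve only the mean \(\mu\). Therefore
\[
J^{\pi^{(\lambda)}}(t,x;\xi)=J^{\mu}(t,x;\xi)+\lambda(T-t)-\lambda T=J^{\mu}(t,x;\xi)-\lambda t,
\]
where \(J^\mu\) is the deterministic cost \eqref{eq:cost} with feedback \(u=\mu\). If \(\pi^{(\lambda),*}\) is optimal for every \((t,x)\), this identity together with Lemma~\ref{lem:value_shift} gives \(J^{\mu^*}(t,x;\xi)=V(t,x;\xi)\) for all \((t,x)\). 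In other words, \(\mu^*\) is an optimal feedback for \eqref{eq:value} from every initial point.

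Next I will extract the pointwise Hamiltonian condition. Take the trajectory \(x^{\mu^*}(\cdot)\) starting at \((t,x)\). Optimality at every intermediate point along it, i.e. dynamic programming with \(J^{\mu^*}=V\), yields
\[
V(s,x^{\mu^*}(s))=V(t,x)-\int_t^s\frac\beta2\norm{\mu^*(r,x^{\mu^*}(r))}^2\,\mathrm dr .
\]
Differentiating at \(s=t\), which is legitimate because \(V\) is \(C^1\) and the trajectory is differentiable, gives
\[
V_t+\nabla_xV^\top(b+\mu^*)+\frac\beta2\norm{\mu^*}^2=0 .
\]
Comparing with \eqref{eq:hjb} shows that \(\mu^*(t,x)\) attains the infimum of \(u\mapsto\nabla_xV^\top u+\frac\beta2\norm{u}^2\). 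This map is strictly convex with unique minimizer \(-\frac1\beta\nabla_xV(t,x;\xi)=u^*(t,x;\xi)\), which proves the claim. Theorem~\ref{thm:policy_sufficiency} then confirms that this \(u^*\) is indeed the AID optimizer.

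The main obstacle is the differentiation step. It needs \(\mu^*\) to be regular enough, at least continuous along trajectories, for \(r\mapsto\norm{\mu^*(r,x^{\mu^*}(r))}^2\) to be continuous at \(r=t\). I would rely on the admissibility class guaranteeing continuous feedbacks with well-posed ODE solutions. If only measurable feedbacks are allowed, I would instead argue almost everywhere in \(s\) along each trajectory. Because optimality holds for every initial pair, this still yields the identity at every \((t,x)\) after using \(x\) itself as the starting point.
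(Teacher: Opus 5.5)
Your main argument is correct under the continuity assumption on the feedback that you flag, but it takes a somewhat different route from the paper.

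\textbf{The paper's route.} The paper stays inside the auxiliary problem. It reuses the verification identity from the sufficiency proof, $J^{\pi}-V^{(\lambda)}=\int_t^T R^{\pi}\,\mathrm{d}r$, with $R^{\pi}\ge 0$ by the auxiliary HJB. Applying this to $\pi^{(\lambda),*}$ gives $R=0$ a.e.\ along the trajectory. Continuity of the residual and arbitrariness of $(t,x)$ upgrade this to pointwise vanishing. The paper then concludes from uniqueness of the minimizer of the auxiliary Hamiltonian together with $\nabla_x V^{(\lambda)}=\nabla_x V$.

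\textbf{Your route.} You first prove the exact identity $J^{\pi^{(\lambda)}}=J^{\mu^{\pi}}-\lambda t$. This shows the auxiliary problem is the deterministic problem in disguise, with control equal to the policy mean. It yields the value shift directly at the level of infima and reduces both directions of the correspondence to statements about the deterministic problem. That is a cleaner structural explanation of why the bridge holds.

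Your second step uses the dynamic-programming identity on every subinterval $[t,s]$ and differentiates at $s=t$. This gives the Hamiltonian \emph{equality} at $(t,x)$ without needing the sign $R\ge 0$. The price is that you use optimality at the intermediate points $(s,x^{\mu^*}(s))$ and the flow property of the closed-loop ODE. The paper needs optimality only at each initial pair.

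Both routes need continuity of the mean field. Both also need a limiting argument to cover $t=T$, where the cost does not see the control.

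\textbf{The measurable-feedback fallback fails.} Your closing fallback for merely measurable feedbacks does not work, and cannot. An a.e.-in-$s$ statement along each trajectory says nothing about the starting instant. Concretely, change $\mu^{\pi^{(\lambda),*}}$ on a single time slice $\{t_0\}\times\R^d$. This alters no trajectory and no cost from any initial pair, including pairs $(t_0,x)$. The modified policy is therefore still optimal everywhere, yet its mean differs from $u^*$ at $t_0$. Continuity is thus a genuine hypothesis here, as it is in the paper's proof (implicitly, via ``the residual is continuous'').
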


Together, Lemma~\ref{lem:value_shift} and Theorems~\ref{thm:policy_sufficiency}--\ref{thm:policy_necessity} show that the auxiliary Gaussian formulation changes the learning mechanism but not the control target. Sufficiency lifts the deterministic AID optimizer to an optimal Gaussian policy, while necessity rules out auxiliary optima with different means. Hence the actor in Section~\ref{sec:algorithm} can be parameterized as the Gaussian mean and deployed deterministically.

\section{Amortized Actor--Critic Algorithm}
\label{sec:algorithm}

Lemma~\ref{lem:value_shift} and Theorems~\ref{thm:policy_sufficiency}--\ref{thm:policy_necessity} identify the learning target: solving the auxiliary Gaussian problem is equivalent, at the policy-mean level, to learning the deterministic optimal guidance field \(u^*\). This implication is essential, because policy-evaluation and policy-gradient methods in CT-RL rely on stochastic exploration and therefore do not apply directly to the original deterministic control problem. At the same time, our auxiliary Gaussian formulation is not the entropy-regularized exploratory objective studied in \cite{jia2021policy,jia2021policypg}; the Gaussian variance is fixed to preserve the deterministic optimizer rather than optimized through an entropy reward. We therefore adapt their martingale orthogonality and policy-gradient methodology to the AID-specific Bellman residual, leading to moment conditions and stochastic-approximation updates specialized to our auxiliary problem.

\paragraph{Parametrization.}
We parameterize the critic and actor by
\[
\hat V^\theta(t,x;\xi):=NN^\theta(t,x,\xi)-\lambda t,
\qquad
\hat\mu^\phi(t,x;\xi):=NN^\phi(t,x,\xi).
\]
The associated Gaussian policy is
\(
\hat\pi^{(\lambda),\phi}(\cdot\mid t,x;\xi)
:=
\mathcal N\!\left(
\hat\mu^\phi(t,x;\xi),
\frac{2\lambda}{\beta d}\Id
\right).
\)

\paragraph{Martingale moment conditions.}
Let \(A_t\sim\pi^{(\lambda)}(\cdot\mid t,X_t;\xi)\), and let \(X_t\) be the state process obtained by rolling out the guided reverse dynamics under this exploratory policy during training. Thus, \(\mathrm{d}\hat V^\theta(t,X_t;\xi)\) below denotes the critic increment along this same sampled-action trajectory. For the auxiliary Gaussian formulation, the quantity
\(
\mathrm{d}V^{(\lambda)}(t,X_t;\xi)+\frac{\beta}{2}\norm{A_t}^{2}\,\mathrm{d}t
\)
plays the role of the Bellman residual. Following the continuous-time actor--critic framework, we require this residual to be orthogonal to the critic and actor test functions. This yields the coupled moment conditions
\[
\E\!\left[
\int_0^T
\partial_\theta \hat V^\theta(t,X_t;\xi)
\left(
\mathrm{d}\hat V^\theta(t,X_t;\xi)
+\frac{\beta}{2}\norm{A_t}^{2}\,\mathrm{d}t
\right)
\right]
=0,
\]
\[
\E\!\left[
\int_0^T
\partial_\phi \log \hat\pi^{(\lambda),\phi}(A_t\mid t,X_t;\xi)
\left(
\mathrm{d}\hat V^\theta(t,X_t;\xi)
+\frac{\beta}{2}\norm{A_t}^{2}\,\mathrm{d}t
\right)
\right]
=0,
\]
where the expectation is taken over the sampled task \(\zeta\sim\rho_{\mathrm{task}}\), the initial state, and the Gaussian policy randomness. The critic is anchored by the auxiliary terminal condition, implemented below through the terminal target \(\hat V_{n,K}\). The discretized updates below are the corresponding stochastic-approximation discretizations of these two moment conditions.

\paragraph{Time discretization and updates.}
The problem is formulated in continuous time because both the diffusion dynamics and the results in Section~\ref{sec:bridge} are inherently continuous-time; time discretization is introduced only for numerical implementation. We implement the method on a time grid \(0=t_0<\cdots<t_K=T\) with \(\Delta t_k:=t_{k+1}-t_k\). At iteration \(n\), we sample \(\zeta_n=(M_n,y_n,x_n^\dagger)\sim\rho_{\mathrm{task}}\), set \(\xi_n=(M_n,y_n)\), and roll out one trajectory under \(\hat\pi^{(\lambda),\phi_n}\). Writing \(\hat V_{n,k}:=\hat V^{\theta_n}(t_k,X_{n,k};\xi_n)\), \(\hat\mu_{n,k}:=\hat\mu^{\phi_n}(t_k,X_{n,k};\xi_n)\), and \(\hat V_{n,K}:=\Psi(X_{n,K};\zeta_n)-\lambda T\), we define the one-step residual
\[
\delta_{n,k}:=\hat V_{n,k+1}-\hat V_{n,k}+\frac{\beta}{2}\norm{A_{n,k}}^{2}\Delta t_k,
\]
which is the discretized analogue of a temporal-difference residual. The updates
\begin{align}
\theta_{n+1}
\leftarrow\;&
\theta_n
+
a_n^{c}\sum_{k=0}^{K-1}
\partial_\theta \hat V^{\theta_n}(t_k,X_{n,k};\xi_n)\,
\delta_{n,k},
\label{eq:critic_update}
\\[0.3em]
\phi_{n+1}
\leftarrow\;&
\phi_n
-
a_n^{a}\sum_{k=0}^{K-1}
\partial_\phi \log \hat\pi^{(\lambda),\phi_n}(A_{n,k}\mid t_k,X_{n,k};\xi_n)\,
\delta_{n,k},
\label{eq:actor_update}
\end{align}
are therefore the continuous-time analogues of actor--critic updates in discrete-time RL \citep{sutton1988learning,konda1999actor}. Moreover, for the Gaussian policy,
\(
\partial_\phi \log \hat\pi^{(\lambda),\phi}(A\mid t,x;\xi)
=
\frac{\beta d}{2\lambda}\,
\left(\partial_\phi \hat\mu^\phi(t,x;\xi)\right)^\top
\left(A-\hat\mu^\phi(t,x;\xi)\right).
\)
This also shows why the auxiliary Gaussian formulation is necessary: the stochastic term \(A-\hat\mu^\phi\) is exactly what makes the policy-gradient update nondegenerate, whereas under a deterministic policy this score term vanishes and the policy-gradient identity becomes uninformative.

Algorithm~\ref{alg:aid} summarizes the offline training and deployment procedures of AID. The amortization lies in learning shared actor and critic parameters over tasks sampled from \(\rho_{\mathrm{task}}\), and then reusing the learned mean guidance law on unseen masked inputs without per-instance optimization.

\begin{algorithm}[t]
\caption{Offline Training and Deployment of AID}
\label{alg:aid}
\begin{algorithmic}[1]
\For{$n=1,\dots,N$}
    \State Sample \(\zeta_n=(M_n,y_n,x_n^\dagger)\sim \rho_{\mathrm{task}}\), set \(\xi_n=(M_n,y_n)\), and initialize \(X_{n,0}\sim p_T\)
    \For{$k=0,\dots,K-1$}
        \State Compute \(\hat\mu_{n,k}=\hat\mu^{\phi_n}(t_k,X_{n,k};\xi_n)\) and sample \(A_{n,k}\sim \mathcal N(\hat\mu_{n,k}, \frac{2\lambda}{\beta d}\Id)\)
        \State Advance one solver step for the guided reverse dynamics using \(A_{n,k}\) to obtain \(X_{n,k+1}\)
    \EndFor
    \State Update \(\theta_n\) and \(\phi_n\) by \eqref{eq:critic_update} and \eqref{eq:actor_update}
\EndFor
\Statex
\State \textbf{Deployment:} for a new masked input \(\xi\), discard policy noise and use the deterministic mean guidance \(\hat\mu^\phi(t_k,X_k;\xi)\) inside the same guided reverse-time solver.
\end{algorithmic}
\end{algorithm}

\section{Empirical Results}
\label{sec:experiments}
\paragraph{Backbones and datasets.}
We evaluate AID within the official EDM and EDM2 pipelines \cite{karras2022elucidating,karras2024analyzing,karras2024guiding}, which are representative modern diffusion backbones designed for strong image quality at low sampling budgets (\(K=18\) steps). Specifically, we use AFHQv2 \cite{choi2020stargan} and FFHQ \cite{karras2019style} with the official EDM backbones, and ImageNet \cite{deng2009imagenet,russakovsky2015imagenet} with the official EDM2 backbone. These are the highest-resolution benchmark settings released in the corresponding official EDM/EDM2 pipelines used here, and together they let us test AID across two different pretrained diffusion families in both a pixel-space regime and a higher-resolution latent-diffusion regime.

\paragraph{Baselines and scope.}
Our comparison set is designed to test the central deployment question: how much can be gained by amortizing a small guidance module while keeping the pretrained diffusion backbone fixed? Full inpainting networks and large learned helper modules, such as LaMa, BrushNet, and DAVI, test a different deployment regime: they amortize inference by training a model-scale component, whereas our goal is to isolate the cost--quality trade-off of a tiny reusable guidance module on a fixed backbone. We compare against representative fixed-model inpainting baselines spanning several major adaptation directions: Unguided, Replacement \cite{song2020score}, MCG \cite{chung2022improving}, DPS \cite{chung2022diffusion}, and RePaint \cite{lugmayr2022repaint}. These baselines cover no conditioning, replacement-based heuristics, manifold-constrained guidance, posterior-gradient guidance, and iterative resampling respectively. In particular, RePaint is a strong and widely used benchmark in this line of work. For the latent EDM2 setting, we also include LatentPaint \cite{corneanu2024latentpaint}, the closest reusable-module competitor available in our latent-backbone setting. We therefore evaluate AID against both strong fixed-model methods and the most relevant latent-space amortized baseline. We report masked-region PSNR together with SSIM \cite{wang2004image} and LPIPS \cite{zhang2018unreasonable} on the completed image, using two AID operating points: the default setting \(K=18\) (AID-18) for best quality and the low-latency setting \(K=12\) (AID-12) for faster sampling. Importantly, the low-latency setting \(K=12\) is obtained by directly deploying the same guidance module trained for the default \(K=18\) setting, without retraining.

\subsection{Pixel-Space EDM Benchmarks: AFHQv2 and FFHQ}
We first evaluate on AFHQv2 and FFHQ, the two pixel-space EDM image benchmarks. The main free-form results are summarized in Table~\ref{tab:main_results}, the complete frontier is shown in Figure~\ref{fig:frontier}, and a visual comparison is presented in Figure \ref{fig:qualitative_edm}. On both datasets, AID-18 achieves the strongest performance on every free-form metric among all compared methods while using the same NFE as the standard 35-NFE baselines and only one tenth of RePaint's sampling budget. The improvement is consistent across fidelity-oriented metrics and perceptual quality, indicating that the learned guidance does not merely improve visible-region matching but also improves the overall completed image. The margins over Replacement, MCG, and DPS are larger still, suggesting that the gain cannot be explained by simple overwrite, projection, or posterior-guidance heuristics.

\begin{table}[t]
\centering
\caption{Free-form inpainting results on 1000 test images. Entries report mean \(\pm\) standard deviation over three seeds; LatentPaint is included only for ImageNet.}
\label{tab:main_results}
\begin{tabular}{llcccccc}
\toprule
Data & Method & NFE & Sec. & PSNR\,$\uparrow$ & SSIM\,$\uparrow$ & LPIPS\,$\downarrow$ \\
\midrule
\multicolumn{7}{l}{\textbf{AFHQv2}} \\
& Unguided       & 35  & 1.54  & 4.85 $\pm$ 0.06 & 0.4296 $\pm$ 0.0035 & 0.2716 $\pm$ 0.0025 \\
& Replace       & 35  & 1.55  & 7.15 $\pm$ 0.08 & 0.5164 $\pm$ 0.0052 & 0.1931 $\pm$ 0.0026 \\
& MCG            & 35  & 1.58  & 7.08 $\pm$ 0.02 & 0.5172 $\pm$ 0.0053 & 0.1915 $\pm$ 0.0015 \\
& DPS            & 35  & 2.41  & 10.00 $\pm$ 0.02 & 0.6336 $\pm$ 0.0048 & 0.1257 $\pm$ 0.0019 \\
& RePaint        & 350 & 15.50 & 10.97 $\pm$ 0.03 & 0.6816 $\pm$ 0.0031 & 0.0879 $\pm$ 0.0006 \\
& AID-12         & \textbf{23}  & \textbf{1.03}  & 11.09 $\pm$ 0.04 & 0.6815 $\pm$ 0.0035 & 0.0955 $\pm$ 0.0017 \\
& AID-18         & 35  & 1.57  & \textbf{13.01 $\pm$ 0.09} & \textbf{0.7336 $\pm$ 0.0034} & \textbf{0.0772 $\pm$ 0.0015} \\
\midrule
\multicolumn{7}{l}{\textbf{FFHQ}} \\
& Unguided       & 35  & 1.55  & 4.65 $\pm$ 0.00 & 0.4607 $\pm$ 0.0042 & 0.2152 $\pm$ 0.0030 \\
& Replace       & 35  & 1.56  & 6.93 $\pm$ 0.08 & 0.5585 $\pm$ 0.0050 & 0.1532 $\pm$ 0.0015 \\
& MCG            & 35  & 1.59  & 6.92 $\pm$ 0.04 & 0.5633 $\pm$ 0.0049 & 0.1507 $\pm$ 0.0022 \\
& DPS            & 35  & 2.40  & 10.41 $\pm$ 0.03 & 0.7028 $\pm$ 0.0027 & 0.0792 $\pm$ 0.0011 \\
& RePaint        & 350 & 15.50 & 10.88 $\pm$ 0.05 & 0.7276 $\pm$ 0.0031 & 0.0677 $\pm$ 0.0013 \\
& AID-12         & \textbf{23}  & \textbf{1.04}  & 11.96 $\pm$ 0.04 & 0.7306 $\pm$ 0.0025 & 0.0662 $\pm$ 0.0011 \\
& AID-18         & 35  & 1.58  & \textbf{12.87 $\pm$ 0.06} & \textbf{0.7598 $\pm$ 0.0027} & \textbf{0.0597 $\pm$ 0.0009} \\
\midrule
\multicolumn{7}{l}{\textbf{ImageNet}} \\
& Unguided       & 35  & 1.84  & 3.53 $\pm$ 0.07 & 0.4311 $\pm$ 0.0025 & 0.5084 $\pm$ 0.0025 \\
& Replace       & 35  & 1.94  & 5.34 $\pm$ 0.02 & 0.4669 $\pm$ 0.0025 & 0.4275 $\pm$ 0.0028 \\
& MCG            & 35  & 1.86  & 5.29 $\pm$ 0.08 & 0.4692 $\pm$ 0.0018 & 0.4242 $\pm$ 0.0018 \\
& DPS            & 35  & 2.50  & 7.26 $\pm$ 0.07 & 0.5010 $\pm$ 0.0024 & 0.4057 $\pm$ 0.0019 \\
& LatentPaint    & 35  & 2.01  & 6.47 $\pm$ 0.06 & 0.4874 $\pm$ 0.0025 & 0.3989 $\pm$ 0.0028 \\
& RePaint        & 350 & 19.06 & 7.56 $\pm$ 0.03 & 0.5204 $\pm$ 0.0019 & 0.3272 $\pm$ 0.0026 \\
& AID-12         & \textbf{23}  & \textbf{1.22}  & 9.67 $\pm$ 0.04 & 0.5491 $\pm$ 0.0092 & 0.3321 $\pm$ 0.0009 \\
& AID-18         & 35  & 1.92  & \textbf{9.79 $\pm$ 0.04} & \textbf{0.5498 $\pm$ 0.0088} & \textbf{0.3108 $\pm$ 0.0006} \\
\bottomrule
\end{tabular}
\end{table}

\begin{figure}[t]
\centering
\includegraphics[width=\linewidth]{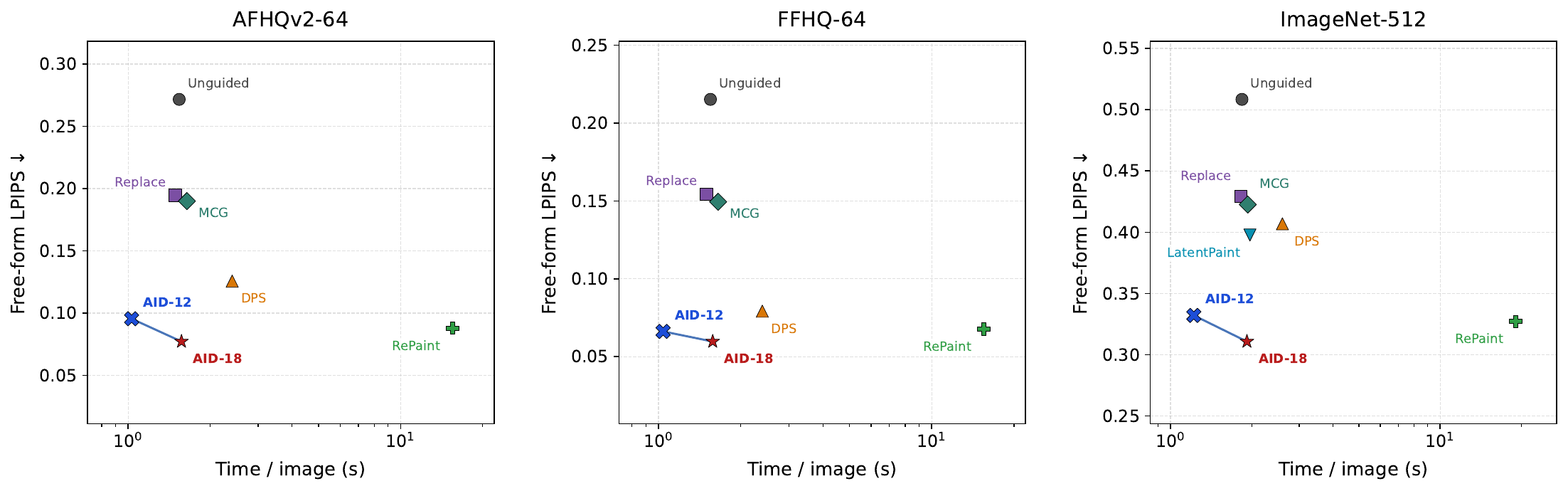}
\caption{Quality--speed frontier on free-form inpainting. Each panel corresponds to one dataset; ImageNet also includes LatentPaint. Lower LPIPS and lower wall-clock time are better.}
\label{fig:frontier}
\end{figure}

\begin{figure}[t]
\centering
\includegraphics[width=\linewidth]{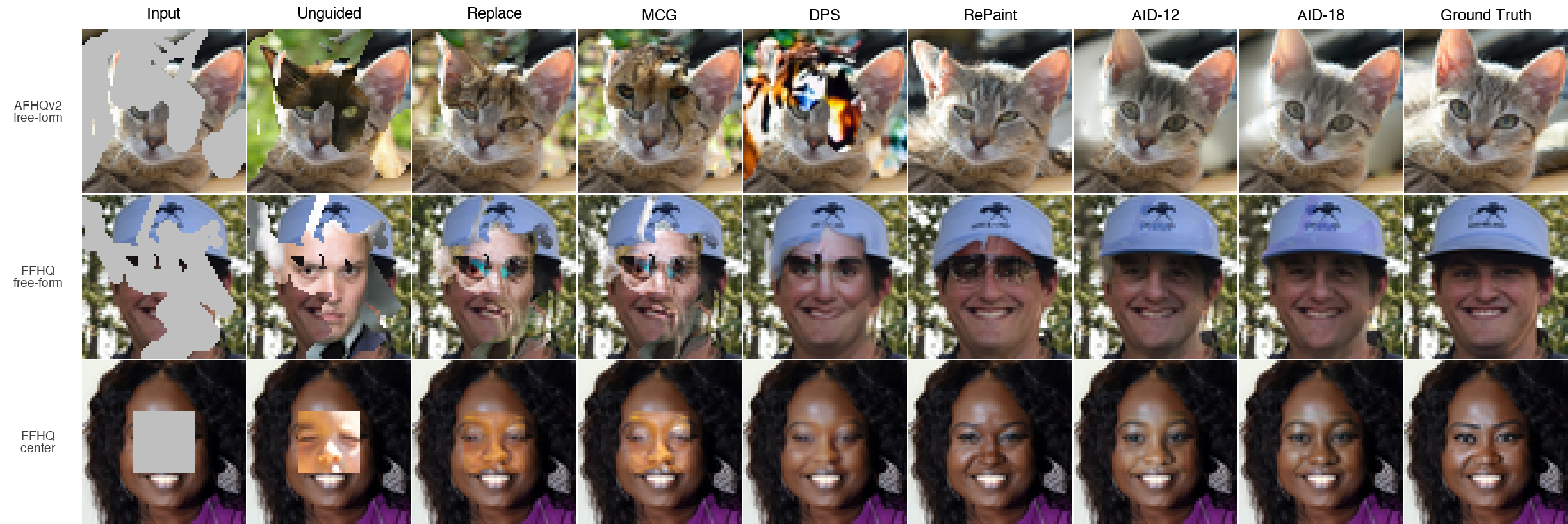}
\caption{Visual comparisons for pixel-space EDM. Additional results are provided in Appendix~\ref{sec:qualitative}.}
\label{fig:qualitative_edm}
\end{figure}

This advantage is not confined to the main free-form benchmark. In Appendix~\ref{sec:additional_results}, we show that AID-18 remains best on all metrics for both center and strip masks on AFHQv2 and FFHQ, implying  that the improvement is stable across unseen mask families rather than concentrated on a single setting. The low-latency operating point serves a different purpose: without any retraining, it substantially reduces inference time while still outperforming Unguided, Replacement, MCG, and DPS across these additional mask settings, and remains competitive with RePaint at only \(\frac{1}{15}\) of its sampling budget. The same learned guidance module can therefore support both a best-quality regime and a low-latency regime without reverting to per-instance optimization.

\subsection{Latent-Space EDM2 Benchmark: ImageNet}
We next evaluate on ImageNet using the pretrained EDM2 latent backbone. 
This setting is more demanding because the images have higher resolution \(512\times 512\), and sampling is performed in latent space whereas the final outputs are decoded by a VAE and evaluated in pixel space. It also provides the most relevant comparison to LatentPaint. As shown in Table~\ref{tab:main_results}, on the free-form benchmark, AID-18 improves over RePaint on all three metrics while using only about one tenth as many function evaluations. It also outperforms LatentPaint on every reported metric, despite using a guidance module with less than \(45\%\) of the trainable parameters of LatentPaint's propagation module. See also a visualization in Figure \ref{fig:qualitative_edm2}.

\begin{figure}[t]
\centering
\includegraphics[width=\linewidth]{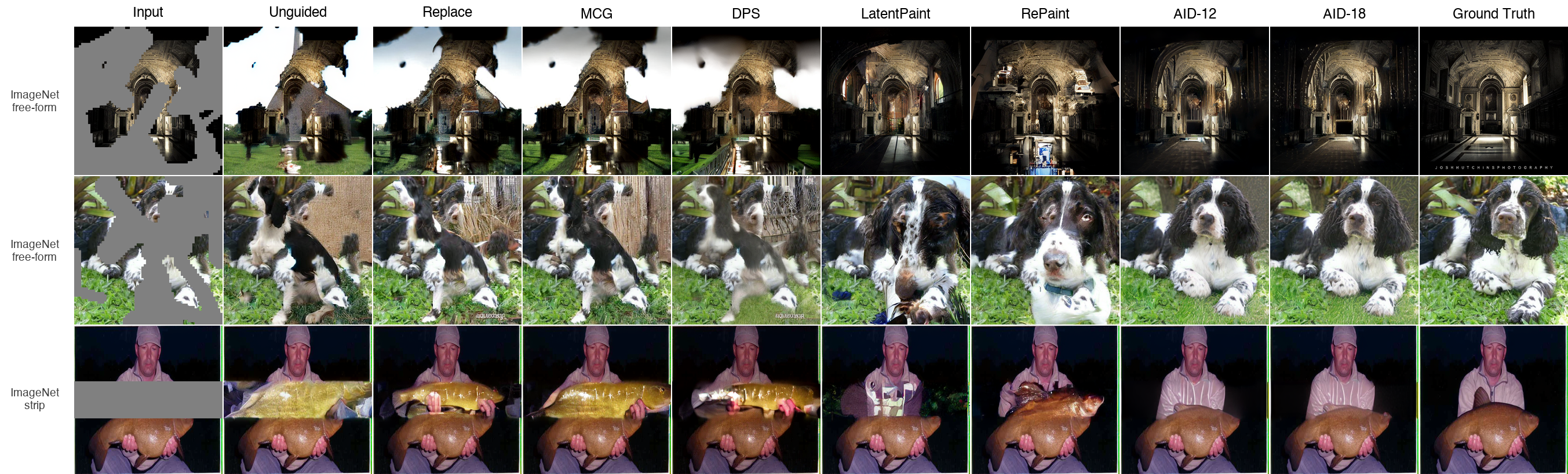}
\caption{Visual comparisons for latent-space EDM2. Additional results are provided in Appendix~\ref{sec:qualitative}.}
\label{fig:qualitative_edm2}
\end{figure}

The higher-resolution setting also sharpens the speed-quality story. The low-latency AID-12 operating point already surpasses all non-RePaint baselines and LatentPaint on every metric with significant margins, and it also improves over RePaint on PSNR and SSIM while giving up only a small amount on LPIPS. Across unseen center and strip masks shown in Appendix \ref{sec:additional_results}, the same pattern remains visible: the default point is the strongest overall configuration, while the low-latency point still improves substantially over the non-RePaint baselines and LatentPaint, and remains competitive with RePaint. The same amortized-guidance principle therefore remains effective when the pretrained model is replaced by the substantially more demanding latent EDM2 backbone at \(512\times 512\).

\subsection{Efficiency and Parameter Overhead}
The quality gains above are accompanied by a favorable deployment profile. The trainable module is tiny relative to the pretrained score model: \(409\)K parameters on EDM (\(0.66\%\) of the score network) and \(411\)K on EDM2 (\(0.33\%\)). On ImageNet this is also less than \(45\%\) the size of LatentPaint's learned propagation module, while giving better results on every reported metric and mask type. Within each dataset/backbone, the \(K=18\) operating point has essentially the same wall-clock cost as the 35-NFE baselines, but delivers markedly better quality. Compared with RePaint, it provides roughly a \(10\times\) speedup across all three datasets. The \(K=12\) operating point pushes further along the same frontier and reaches \(15\times\) speedup over RePaint while remaining competitive in quality. Taken together, these results support the main empirical claim of the paper: amortizing the guidance module moves up the entire quality--speed frontier of pretrained diffusion models, rather than merely recovering a different point on the same curve. A visual breakdown of the parameter overhead is deferred to Appendix \ref{sec:overhead}.

\section{Conclusion}
\label{sec:conclusion}

In this paper we introduce AID, a continuous-time actor--critic framework for image inpainting that keeps the diffusion backbone fixed and learns a small reusable guidance module offline. We formulate amortized inpainting as a deterministic guidance control problem and establish a policy-equivalence bridge to an auxiliary Gaussian formulation, preserving the deterministic guidance field used at deployment. Empirically, across AFHQv2, FFHQ, and ImageNet under the pixel EDM and latent EDM2 pipelines, AID improves the quality--speed trade-off over strong fixed-backbone and amortized inpainting baselines while adding less than one percent trainable overhead.

More broadly, this work adds to a recent line of research suggesting that control- and RL-based frameworks can provide a theoretically grounded  way to study and improve diffusion models. While this direction is still in its early innings,  our results indicate that continuous-time RL underpins not only rigorous analysis but also practical algorithm design in diffusion-based generation. We hope AID can motivate further work on controlled guidance, amortized adaptation, and broader uses of continuous-time RL for generative AI.

\bibliographystyle{plainnat}
\bibliography{ref}

\appendix
\setcounter{figure}{0}
\setcounter{table}{0}
\renewcommand{\thefigure}{A\arabic{figure}}
\renewcommand{\thetable}{A\arabic{table}}
\renewcommand{\theHfigure}{appendix.\arabic{figure}}
\renewcommand{\theHtable}{appendix.\arabic{table}}





\section{Proof of Lemma~\ref{lem:value_shift}}
\label{app:proof_value_shift}

Recall that, for a fixed observable input $\xi=(M,y)$, the deterministic value function $V$ solves
\begin{equation}
V_t(t,x;\xi)
+
\inf_{u\in\R^d}
\left\{
\nabla_x V(t,x;\xi)^\top \bigl(b(t,x)+u\bigr)
+
\frac{\beta}{2}\norm{u}^{2}
\right\}
=0,
\label{eq:appendix_hjb_det}
\end{equation}
with terminal condition
\begin{equation}
V(T,x;\xi)=\bar\Psi(x;\xi),
\label{eq:appendix_terminal_det}
\end{equation}
whereas the centered auxiliary value function $V^{(\lambda)}$ solves
\begin{equation}
V_t^{(\lambda)}(t,x;\xi)
+
\inf_{\mu\in\R^d}
\left\{
\nabla_x V^{(\lambda)}(t,x;\xi)^\top \bigl(b(t,x)+\mu\bigr)
+
\frac{\beta}{2}\norm{\mu}^{2}
+
\lambda
\right\}
=0,
\label{eq:appendix_hjb_aux}
\end{equation}
with terminal condition
\begin{equation}
V^{(\lambda)}(T,x;\xi)=\bar\Psi(x;\xi)-\lambda T.
\label{eq:appendix_terminal_aux}
\end{equation}

Our goal is to show that
\[
V^{(\lambda)}(t,x;\xi)=V(t,x;\xi)-\lambda t.
\]

\begin{proof}
Define
\[
W(t,x;\xi):=V(t,x;\xi)-\lambda t.
\]
Then we have
\begin{equation}
W_t(t,x;\xi)=V_t(t,x;\xi)-\lambda,
\qquad
\nabla_x W(t,x;\xi)=\nabla_x V(t,x;\xi).
\label{eq:appendix_W_derivatives}
\end{equation}
Substituting \eqref{eq:appendix_W_derivatives} into the left-hand side of the auxiliary HJB \eqref{eq:appendix_hjb_aux} gives
\[
\begin{aligned}
&W_t(t,x;\xi)
+
\inf_{\mu\in\R^d}
\left\{
\nabla_x W(t,x;\xi)^\top \bigl(b(t,x)+\mu\bigr)
+
\frac{\beta}{2}\norm{\mu}^{2}
+
\lambda
\right\}
\\
=\;&
V_t(t,x;\xi)-\lambda
+
\inf_{\mu\in\R^d}
\left\{
\nabla_x V(t,x;\xi)^\top \bigl(b(t,x)+\mu\bigr)
+
\frac{\beta}{2}\norm{\mu}^{2}
+
\lambda
\right\}=0.
\end{aligned}
\]
Moreover,
\[
W(T,x;\xi)
=
V(T,x;\xi)-\lambda T
=
\bar\Psi(x;\xi)-\lambda T.
\]
Hence $W$ solves \eqref{eq:appendix_hjb_aux} with terminal condition \eqref{eq:appendix_terminal_aux}. The conclusion now follows from 
the uniqueness of classical solutions to \eqref{eq:appendix_hjb_aux}.
\end{proof}

\section{Proof of Theorem~\ref{thm:policy_sufficiency}}
\label{app:proof_policy}

\begin{proof}
Fix an arbitrary $\pi^{(\lambda)}\in \Pi^{(\lambda)}(\xi)$, and denote its mean by
\[
\mu^\pi(t,x;\xi)
:=
\int_{\R^d} a\,\pi^{(\lambda)}(a\mid t,x;\xi)\,\mathrm{d}a.
\]
Since \(\pi^{(\lambda)}(\cdot\mid t,x;\xi)\) has mean
\(\mu^\pi(t,x;\xi)\) and covariance \(\frac{2\lambda}{\beta d}\Id\), we have
\begin{equation}
\int_{\R^d}\frac{\beta}{2}\norm{a}^{2}\,
\pi^{(\lambda)}(a\mid t,x;\xi)\,\mathrm{d}a
=
\frac{\beta}{2}\norm{\mu^\pi(t,x;\xi)}^2+\lambda.
\label{eq:appendix_running_cost_identity}
\end{equation}
Hence the centered auxiliary objective can be rewritten as
\begin{equation}
J^{\pi^{(\lambda)}}(t,x;\xi)
=
\bar\Psi\bigl(x^{\pi^{(\lambda)}}(T;t,x,\xi);\xi\bigr)
-
\lambda T
+
\int_t^T
\left(
\frac{\beta}{2}\norm{\mu^\pi(r,x^{\pi^{(\lambda)}}(r);\xi)}^2+\lambda
\right)\mathrm{d}r.
\label{eq:appendix_J_pi_rewrite}
\end{equation}

Applying the chain rule gives
\[
\frac{\mathrm{d}}{\mathrm{d}r}
V^{(\lambda)}\bigl(r,x^{\pi^{(\lambda)}}(r);\xi\bigr)
=
V_t^{(\lambda)}\bigl(r,x^{\pi^{(\lambda)}}(r);\xi\bigr)
+
\nabla_x V^{(\lambda)}\bigl(r,x^{\pi^{(\lambda)}}(r);\xi\bigr)^\top
\Bigl(
b+\mu^\pi
\Bigr),
\]
where, to lighten notation, we have suppressed the arguments \((r,x^{\pi^{(\lambda)}}(r),\xi)\) on the right-hand side.

Integrating from $t$ to $T$ and using the terminal condition of \(V^{(\lambda)}\), we obtain
\[
\bar\Psi\bigl(x^{\pi^{(\lambda)}}(T);\xi\bigr)-\lambda T-V^{(\lambda)}(t,x;\xi)
=
\int_t^T
\left[
V_t^{(\lambda)}
+
\nabla_x V^{(\lambda)}{}^\top (b+\mu^\pi)
\right]\mathrm{d}r.
\]
Adding \(\int_t^T \bigl(\frac{\beta}{2}\norm{\mu^\pi}^2+\lambda\bigr)\,\mathrm{d}r\) to both sides gives
\[
\begin{aligned}
&\bar\Psi\bigl(x^{\pi^{(\lambda)}}(T);\xi\bigr)-\lambda T-V^{(\lambda)}(t,x;\xi)
+
\int_t^T
\left(
\frac{\beta}{2}\norm{\mu^\pi}^2+\lambda
\right)\mathrm{d}r
\\
=\;&
\int_t^T
\left[
V_t^{(\lambda)}
+
\nabla_x V^{(\lambda)}{}^\top (b+\mu^\pi)
+
\frac{\beta}{2}\norm{\mu^\pi}^2
+
\lambda
\right]\mathrm{d}r.
\end{aligned}
\]
By \eqref{eq:appendix_J_pi_rewrite}, the left-hand side is exactly \(J^{\pi^{(\lambda)}}(t,x;\xi)-V^{(\lambda)}(t,x;\xi)\). Hence
\begin{equation}
J^{\pi^{(\lambda)}}(t,x;\xi)-V^{(\lambda)}(t,x;\xi)
=
\int_t^T
\left[
V_t^{(\lambda)}
+
\nabla_x V^{(\lambda)}{}^\top (b+\mu^\pi)
+
\frac{\beta}{2}\norm{\mu^\pi}^2
+
\lambda
\right]\mathrm{d}r.
\label{eq:appendix_verification_identity}
\end{equation}

Since \(V^{(\lambda)}\) solves the auxiliary HJB \eqref{eq:aux_hjb}, we have
\[
V_t^{(\lambda)}
+
\nabla_x V^{(\lambda)}{}^\top (b+\mu^\pi)
+
\frac{\beta}{2}\norm{\mu^\pi}^2
+
\lambda
\ge 0.
\]
Substituting this into \eqref{eq:appendix_verification_identity} yields
\[
J^{\pi^{(\lambda)}}(t,x;\xi)\ge V^{(\lambda)}(t,x;\xi).
\]
Since \(\pi^{(\lambda)}\in\Pi^{(\lambda)}(\xi)\) is arbitrary, this establishes that $V^{(\lambda)}$ is a lower bound of the value function of the auxiliary problem.

Now consider the deterministic Hamiltonian
\[
\mathcal H(u)
:=
\nabla_x V(t,x;\xi)^\top \bigl(b(t,x)+u\bigr)
+
\frac{\beta}{2}\norm{u}^{2},
\qquad
u\in\R^d.
\]
This is a {\it strictly} convex quadratic function of $u$, whose unique minimizer is
\begin{equation}
u^*(t,x;\xi)=-\frac{1}{\beta}\nabla_x V(t,x;\xi).
\label{eq:appendix_u_star}
\end{equation}
Meanwhile, the auxiliary Hamiltonian is
\[
\mathcal H^{(\lambda)}(\mu)
:=
\nabla_x V^{(\lambda)}(t,x;\xi)^\top \bigl(b(t,x)+\mu\bigr)
+
\frac{\beta}{2}\norm{\mu}^{2}
+
\lambda,
\]
whose unique minimizer is 
\[ -\frac{1}{\beta}\nabla_x V^{(\lambda)}(t,x;\xi)=-\frac{1}{\beta}\nabla_x V(t,x;\xi)=u^*(t,x;\xi), \]
where the first equality follows from Lemma~\ref{lem:value_shift}.

Now consider the specific Gaussian policy slice
\[
\pi^{(\lambda),*}(\cdot\mid t,x;\xi)
=
\mathcal N\!\left(
u^*(t,x;\xi),
\frac{2\lambda}{\beta d}\Id
\right).
\]
where \(u^*\) is given by \eqref{eq:appendix_u_star}. Because \(u^*(t,x;\xi)\) pointwise minimizes the auxiliary Hamiltonian, along the corresponding state trajectory the integrand on the right-hand side of \eqref{eq:appendix_verification_identity} is constantly zero. Thus
\[
J^{\pi^{(\lambda),*}}(t,x;\xi)=V^{(\lambda)}(t,x;\xi),
\]
implying that \(\pi^{(\lambda),*}\) is optimal for the auxiliary problem.

Finally, \(u^*(t,x;\xi)\) also minimizes the deterministic Hamiltonian; so the same argument gives that it is an optimal deterministic control for the original problem \eqref{eq:value}.
\end{proof}

\section{Proof of Theorem~\ref{thm:policy_necessity}}
\label{app:proof_learning_equivalence}

\begin{proof}
The proof of Theorem~\ref{thm:policy_sufficiency} establishes that, for any Gaussian policy \(\pi^{(\lambda)}\),
\[
J^{\pi^{(\lambda)}}(t,x;\xi)-V^{(\lambda)}(t,x;\xi)
=
\int_t^T
R^{\pi^{(\lambda)}}\bigl(r,x^{\pi^{(\lambda)}}(r;t,x,\xi);\xi\bigr)
\,\mathrm{d}r,
\]
where
\[
R^{\pi^{(\lambda)}}(r,z;\xi)
:=
V_t^{(\lambda)}(r,z;\xi)
+
\nabla_x V^{(\lambda)}(r,z;\xi)^\top
\bigl(b(r,z)+\mu^{\pi^{(\lambda)}}(r,z;\xi)\bigr)
+
\frac{\beta}{2}\norm{\mu^{\pi^{(\lambda)}}(r,z;\xi)}^2
+
\lambda .
\]
By the auxiliary HJB \eqref{eq:aux_hjb}, \(R^{\pi^{(\lambda)}}(r,z;\xi)\ge 0\).
Applying this identity to \(\pi^{(\lambda),*}\) and noting 
$
J^{\pi^{(\lambda),*}}(t,x;\xi)
=
V^{(\lambda)}(t,x;\xi), 
$
we have
\[
\int_t^T
R^{\pi^{(\lambda),*}}
\bigl(r,x^{\pi^{(\lambda),*}}(r;t,x,\xi);\xi\bigr)
\,\mathrm{d}r
=
0.
\]
Since the integrand above is nonnegative, the integral being zero implies that it vanishes almost everywhere along the trajectory. Because the initial pair \((t,x)\) is arbitrary and the residual is continuous, the residual vanishes pointwise. Hence
\(\mu^{\pi^{(\lambda),*}}(t,x;\xi)\) attains the pointwise minimum over \(\mu\) in \eqref{eq:aux_hjb}. However, the proof of Theorem~\ref{thm:policy_sufficiency} yields that the minimizer is unique and equals \(u^*(t,x;\xi)\) which is the optimal deterministic policy for the original problem.
\end{proof}

\section{Additional Quantitative Results}
\label{sec:additional_results}
Table~\ref{tab:center_results} reports the full center-mask results across all three datasets, and Table~\ref{tab:strip_results} reports the corresponding strip-mask results. These tables complement the free-form comparison in the main paper and support the robustness discussion in Section~\ref{sec:experiments}.

\paragraph{Runtime note.}
Wall-clock times are reported as empirical per-image sampling times under the same hardware and software environment for all methods. For each dataset, the reported time is averaged over the free-form, center, and strip mask settings. These numbers are intended as a secondary deployment-time indicator; the primary efficiency measure is NFE, since all methods within the same backbone use the same frozen score network and therefore have comparable per-evaluation cost. Timing values should mainly be interpreted for within-backbone comparisons, since the pixel-space EDM and latent-space EDM2 settings use different pretrained networks and representation spaces.

\begin{table}[H]
\centering
\caption{Average per-image sampling time across free-form, center, and strip masks.}
\label{tab:appendix_sampling_time}
\begin{tabular}{llccc}
\toprule
Method & NFE & AFHQv2 & FFHQ & ImageNet \\
\midrule
Unguided       & 35  & 1.54 & 1.55 & 1.84 \\
Replacement    & 35  & 1.55 & 1.56 & 1.94 \\
MCG            & 35  & 1.58 & 1.59 & 1.86 \\
DPS            & 35  & 2.41 & 2.40 & 2.50 \\
LatentPaint    & 35  & --   & --   & 2.01 \\
RePaint        & 350 & 15.50 & 15.50 & 19.06 \\
\midrule
AID (\(K=12\)) & \textbf{23}  & \textbf{1.03} & \textbf{1.04} & \textbf{1.22} \\
AID (\(K=18\)) & 35  & 1.57 & 1.58 & 1.92 \\
\bottomrule
\end{tabular}
\end{table}

\paragraph{Training note.}
Training the guidance module is lightweight relative to score-model pretraining. On the same Apple M2 Max / MPS environment, AID converges in roughly five hours on AFHQv2 and FFHQ and roughly eight hours on ImageNet, so even the largest reported configuration remains a single overnight run on laptop-class hardware. These numbers refer only to training the small guidance module on top of a fixed pretrained model. For context, the LaMa paper reports that \emph{Big LaMa}, their larger teaser model rather than the base model family, was trained on eight NVIDIA V100 GPUs for approximately 240 hours \cite[p.~6]{suvorov2022resolution}. LatentPaint is closer in spirit because it also avoids full model retraining, but its learned propagation module is latent-space specific and larger than our EDM2 guidance module; our ImageNet results therefore provide the most direct comparison to this middle-ground baseline.

All wall-clock times are measured in the same environment, so the relative comparisons are controlled by the same hardware and implementation stack. On Colab T4 GPU, absolute runtimes change, but the relative ordering is similar because the dominant difference comes from the number of function evaluations.

\begin{figure}[H]
\centering
\includegraphics[width=\linewidth]{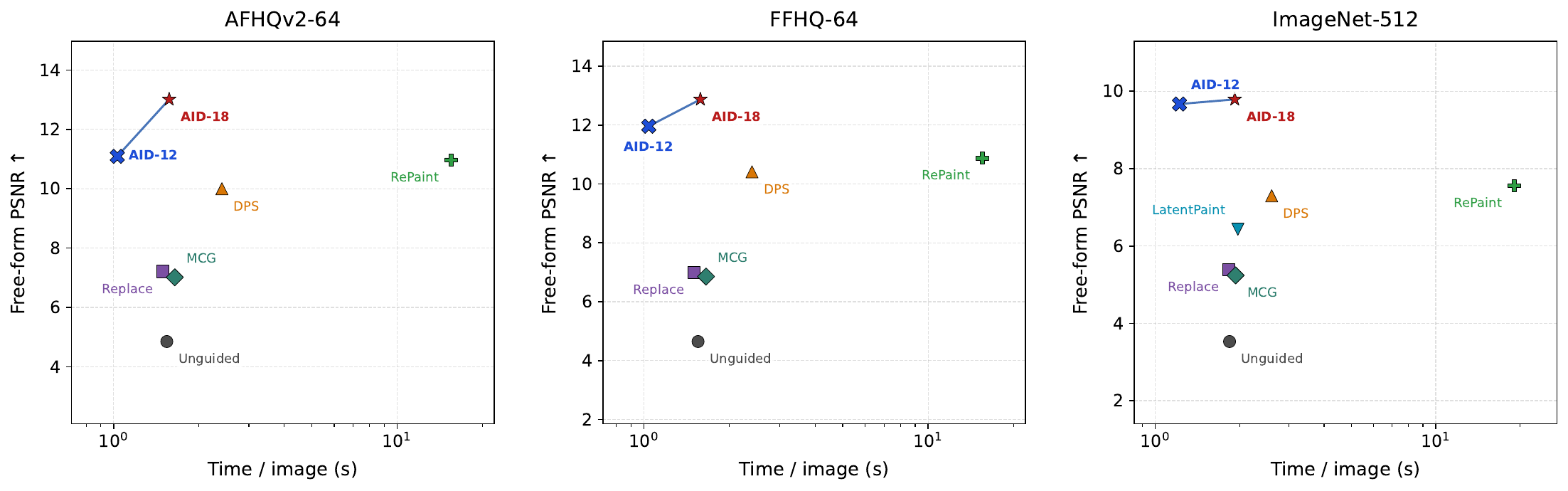}
\caption{Quality--speed frontier on the free-form benchmark measured by PSNR. Higher PSNR and lower wall-clock time are better.}
\label{fig:frontier_psnr_appendix}
\end{figure}

\begin{figure}[H]
\centering
\includegraphics[width=\linewidth]{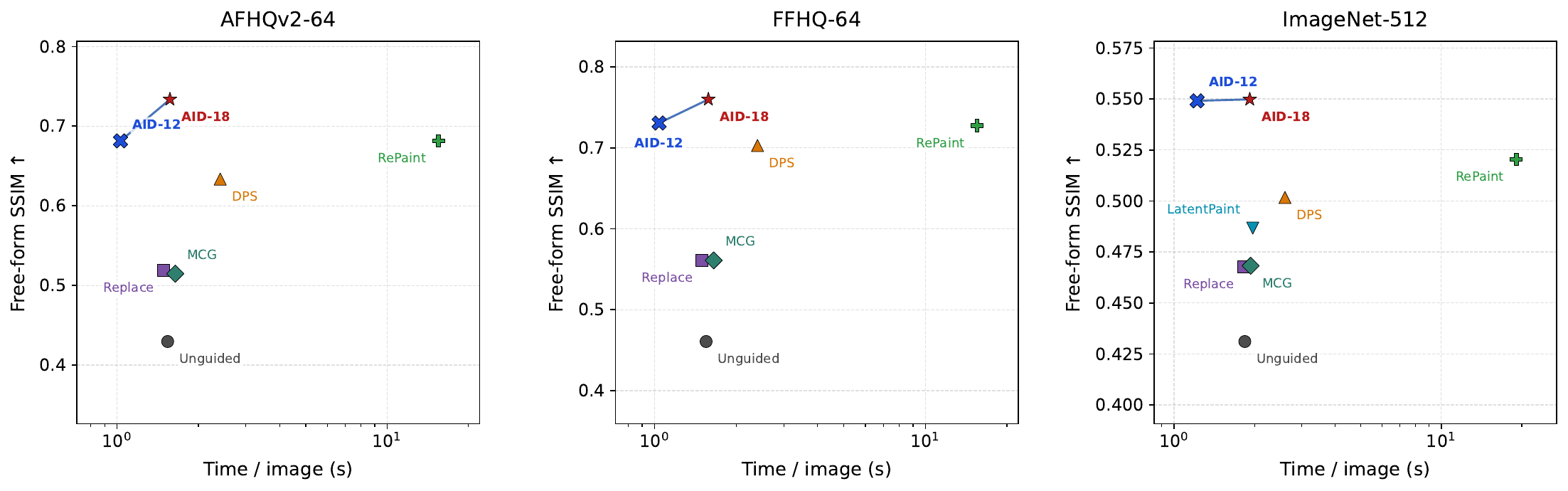}
\caption{Quality--speed frontier on the free-form benchmark measured by SSIM. Higher SSIM and lower wall-clock time are better.}
\label{fig:frontier_ssim_appendix}
\end{figure}

\begin{table}[H]
\centering
\caption{Center-mask inpainting results across the three datasets (100 test images per dataset). Entries are \textbf{mean $\pm$ std} over three random seeds. \textbf{Bold}: best.}
\label{tab:center_results}
\begin{tabular}{llccccc}
\toprule
Data & Method & NFE & Sec. & PSNR\,$\uparrow$ & SSIM\,$\uparrow$ & LPIPS\,$\downarrow$ \\
\midrule
\multicolumn{7}{l}{\textbf{AFHQv2}} \\
& Unguided       & 35  & 1.54  & 5.17 $\pm$ 0.25 & 0.8032 $\pm$ 0.0016 & 0.1741 $\pm$ 0.0082 \\
& Replace       & 35  & 1.55  & 8.42 $\pm$ 0.15 & 0.8559 $\pm$ 0.0008 & 0.0719 $\pm$ 0.0013 \\
& MCG            & 35  & 1.58  & 8.37 $\pm$ 0.39 & 0.8573 $\pm$ 0.0029 & 0.0698 $\pm$ 0.0031 \\
& DPS            & 35  & 2.41  & 10.11 $\pm$ 0.20 & 0.8839 $\pm$ 0.0016 & 0.0663 $\pm$ 0.0048 \\
& RePaint        & 350 & 15.50 & 13.70 $\pm$ 0.11 & 0.9293 $\pm$ 0.0026 & 0.0289 $\pm$ 0.0016 \\
& AID-12         & \textbf{23}  & \textbf{1.03}  & 12.08 $\pm$ 0.15 & 0.9153 $\pm$ 0.0004 & 0.0371 $\pm$ 0.0004 \\
& AID-18         & 35  & 1.57  & \textbf{14.54 $\pm$ 0.15} & \textbf{0.9352 $\pm$ 0.0016} & \textbf{0.0288 $\pm$ 0.0010} \\
\midrule
\multicolumn{7}{l}{\textbf{FFHQ}} \\
& Unguided       & 35  & 1.55  & 7.87 $\pm$ 0.08 & 0.8489 $\pm$ 0.0018 & 0.0986 $\pm$ 0.0019 \\
& Replace       & 35  & 1.56  & 11.49 $\pm$ 0.22 & 0.9020 $\pm$ 0.0027 & 0.0428 $\pm$ 0.0004 \\
& MCG            & 35  & 1.59  & 11.78 $\pm$ 0.10 & 0.9068 $\pm$ 0.0017 & 0.0405 $\pm$ 0.0009 \\
& DPS            & 35  & 2.40  & 14.83 $\pm$ 0.21 & 0.9384 $\pm$ 0.0021 & 0.0244 $\pm$ 0.0014 \\
& RePaint        & 350 & 15.50 & 15.64 $\pm$ 0.07 & 0.9496 $\pm$ 0.0013 & 0.0188 $\pm$ 0.0002 \\
& AID-12         & \textbf{23}  & \textbf{1.04}  & 15.54 $\pm$ 0.03 & 0.9465 $\pm$ 0.0006 & 0.0220 $\pm$ 0.0006 \\
& AID-18         & 35  & 1.58  & \textbf{16.93 $\pm$ 0.11} & \textbf{0.9577 $\pm$ 0.0005} & \textbf{0.0184 $\pm$ 0.0003} \\
\midrule
\multicolumn{7}{l}{\textbf{ImageNet}} \\
& Unguided       & 35  & 1.84  & 4.93 $\pm$ 0.15 & 0.6144 $\pm$ 0.0008 & 0.2158 $\pm$ 0.0018 \\
& Replace       & 35  & 1.94  & 6.39 $\pm$ 0.09 & 0.6235 $\pm$ 0.0001 & 0.1829 $\pm$ 0.0018 \\
& MCG            & 35  & 1.86  & 6.38 $\pm$ 0.03 & 0.6235 $\pm$ 0.0005 & 0.1823 $\pm$ 0.0013 \\
& DPS            & 35  & 2.50  & 7.34 $\pm$ 0.08 & 0.6169 $\pm$ 0.0004 & 0.2031 $\pm$ 0.0022 \\
& LatentPaint    & 35  & 2.01  & 7.24 $\pm$ 0.17 & 0.6275 $\pm$ 0.0011 & 0.1762 $\pm$ 0.0013 \\
& RePaint        & 350 & 19.06 & 7.88 $\pm$ 0.10 & 0.6335 $\pm$ 0.0005 & 0.1622 $\pm$ 0.0008 \\
& AID-12         & \textbf{23}  & \textbf{1.22}  & 9.62 $\pm$ 0.02 & 0.6371 $\pm$ 0.0006 & 0.1656 $\pm$ 0.0009 \\
& AID-18         & 35  & 1.92  & \textbf{9.86 $\pm$ 0.01} & \textbf{0.6375 $\pm$ 0.0009} & \textbf{0.1598 $\pm$ 0.0001} \\
\bottomrule
\end{tabular}
\end{table}

\begin{table}[H]
\centering
\caption{Strip-mask inpainting results across the three datasets (100 test images per dataset). Entries are \textbf{mean $\pm$ std} over three random seeds. \textbf{Bold}: best.}
\label{tab:strip_results}
\begin{tabular}{llccccc}
\toprule
Data & Method & NFE & Sec. & PSNR\,$\uparrow$ & SSIM\,$\uparrow$ & LPIPS\,$\downarrow$ \\
\midrule
\multicolumn{7}{l}{\textbf{AFHQv2}} \\
& Unguided       & 35  & 1.54  & 4.94 $\pm$ 0.28 & 0.6976 $\pm$ 0.0029 & 0.2030 $\pm$ 0.0078 \\
& Replace       & 35  & 1.55  & 7.60 $\pm$ 0.16 & 0.7529 $\pm$ 0.0038 & 0.1040 $\pm$ 0.0018 \\
& MCG            & 35  & 1.58  & 7.67 $\pm$ 0.38 & 0.7563 $\pm$ 0.0070 & 0.1036 $\pm$ 0.0072 \\
& DPS            & 35  & 2.41  & 10.17 $\pm$ 0.32 & 0.8071 $\pm$ 0.0045 & 0.0873 $\pm$ 0.0066 \\
& RePaint        & 350 & 15.50 & 12.37 $\pm$ 0.30 & 0.8654 $\pm$ 0.0039 & 0.0410 $\pm$ 0.0023 \\
& AID-12         & \textbf{23}  & \textbf{1.03}  & 11.42 $\pm$ 0.19 & 0.8452 $\pm$ 0.0050 & 0.0536 $\pm$ 0.0030 \\
& AID-18         & 35  & 1.57  & \textbf{13.91 $\pm$ 0.16} & \textbf{0.8837 $\pm$ 0.0014} & \textbf{0.0402 $\pm$ 0.0007} \\
\midrule
\multicolumn{7}{l}{\textbf{FFHQ}} \\
& Unguided       & 35  & 1.55  & 5.69 $\pm$ 0.18 & 0.7359 $\pm$ 0.0018 & 0.1414 $\pm$ 0.0023 \\
& Replace       & 35  & 1.56  & 8.73 $\pm$ 0.25 & 0.7972 $\pm$ 0.0036 & 0.0764 $\pm$ 0.0052 \\
& MCG            & 35  & 1.59  & 8.72 $\pm$ 0.19 & 0.8023 $\pm$ 0.0039 & 0.0751 $\pm$ 0.0023 \\
& DPS            & 35  & 2.40  & 12.30 $\pm$ 0.14 & 0.8632 $\pm$ 0.0051 & 0.0435 $\pm$ 0.0019 \\
& RePaint        & 350 & 15.50 & 12.91 $\pm$ 0.27 & 0.8823 $\pm$ 0.0027 & 0.0343 $\pm$ 0.0011 \\
& AID-12         & \textbf{23}  & \textbf{1.04}  & 13.68 $\pm$ 0.04 & 0.8857 $\pm$ 0.0023 & 0.0366 $\pm$ 0.0016 \\
& AID-18         & 35  & 1.58  & \textbf{14.83 $\pm$ 0.04} & \textbf{0.9033 $\pm$ 0.0008} & \textbf{0.0311 $\pm$ 0.0006} \\
\midrule
\multicolumn{7}{l}{\textbf{ImageNet}} \\
& Unguided       & 35  & 1.84  & 4.38 $\pm$ 0.14 & 0.5473 $\pm$ 0.0007 & 0.2981 $\pm$ 0.0045 \\
& Replace       & 35  & 1.94  & 5.73 $\pm$ 0.03 & 0.5612 $\pm$ 0.0007 & 0.2540 $\pm$ 0.0021 \\
& MCG            & 35  & 1.86  & 5.69 $\pm$ 0.09 & 0.5615 $\pm$ 0.0008 & 0.2520 $\pm$ 0.0022 \\
& DPS            & 35  & 2.50  & 6.89 $\pm$ 0.11 & 0.5629 $\pm$ 0.0014 & 0.2718 $\pm$ 0.0038 \\
& LatentPaint    & 35  & 2.01  & 6.78 $\pm$ 0.27 & 0.5681 $\pm$ 0.0012 & 0.2409 $\pm$ 0.0032 \\
& RePaint        & 350 & 19.06 & 7.20 $\pm$ 0.11 & 0.5758 $\pm$ 0.0005 & 0.2230 $\pm$ 0.0006 \\
& AID-12         & \textbf{23}  & \textbf{1.22}  & 9.00 $\pm$ 0.07 & 0.5857 $\pm$ 0.0005 & 0.2199 $\pm$ 0.0013 \\
& AID-18         & 35  & 1.92  & \textbf{9.32 $\pm$ 0.02} & \textbf{0.5863 $\pm$ 0.0003} & \textbf{0.2096 $\pm$ 0.0009} \\
\bottomrule
\end{tabular}
\end{table}

\section{Additional Qualitative Results}
\label{sec:qualitative}
\begin{figure}[H]
\centering
\includegraphics[width=\linewidth]{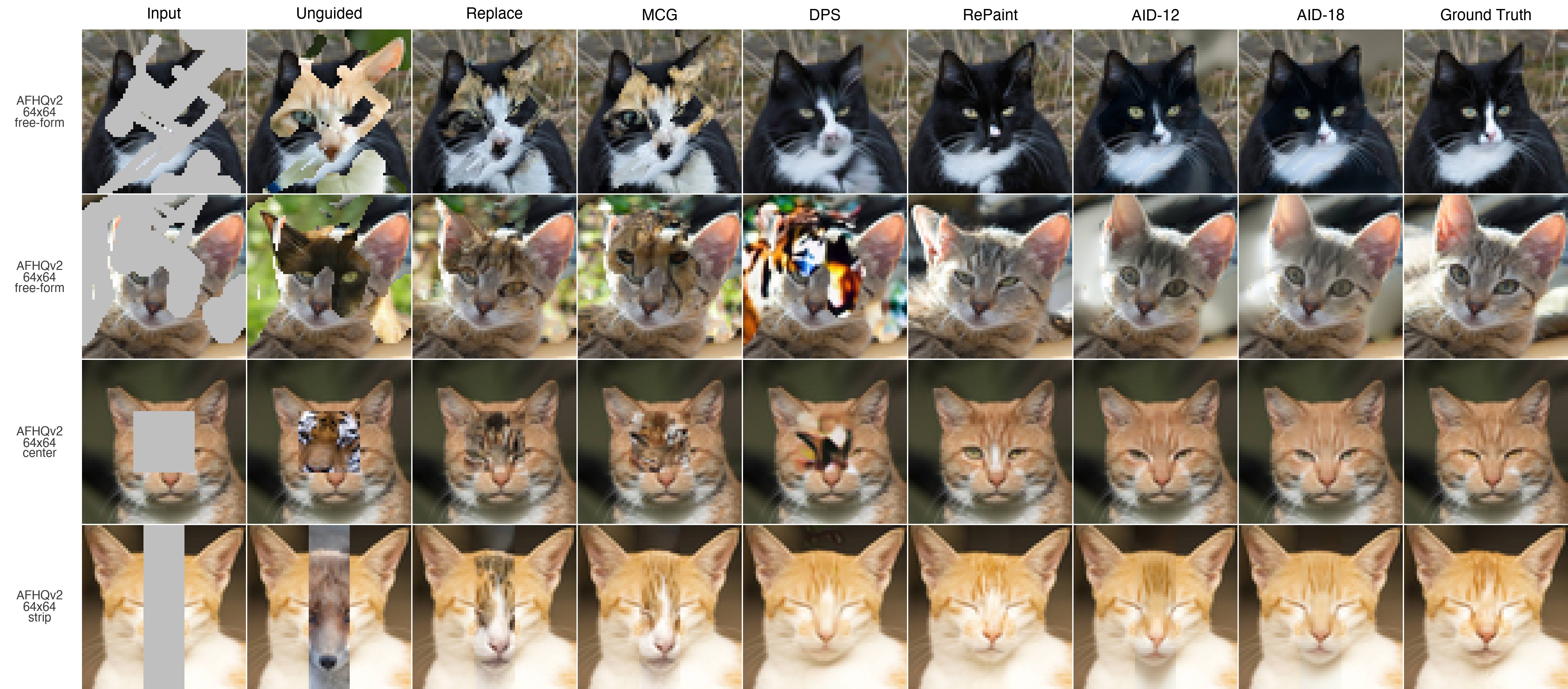}
\caption{Additional AFHQv2 qualitative comparisons across mask types. Columns follow the same method ordering as Figure~\ref{fig:qualitative_edm}.}
\label{fig:appendix_qual_afhq}
\end{figure}

\begin{figure}[H]
\centering
\includegraphics[width=\linewidth]{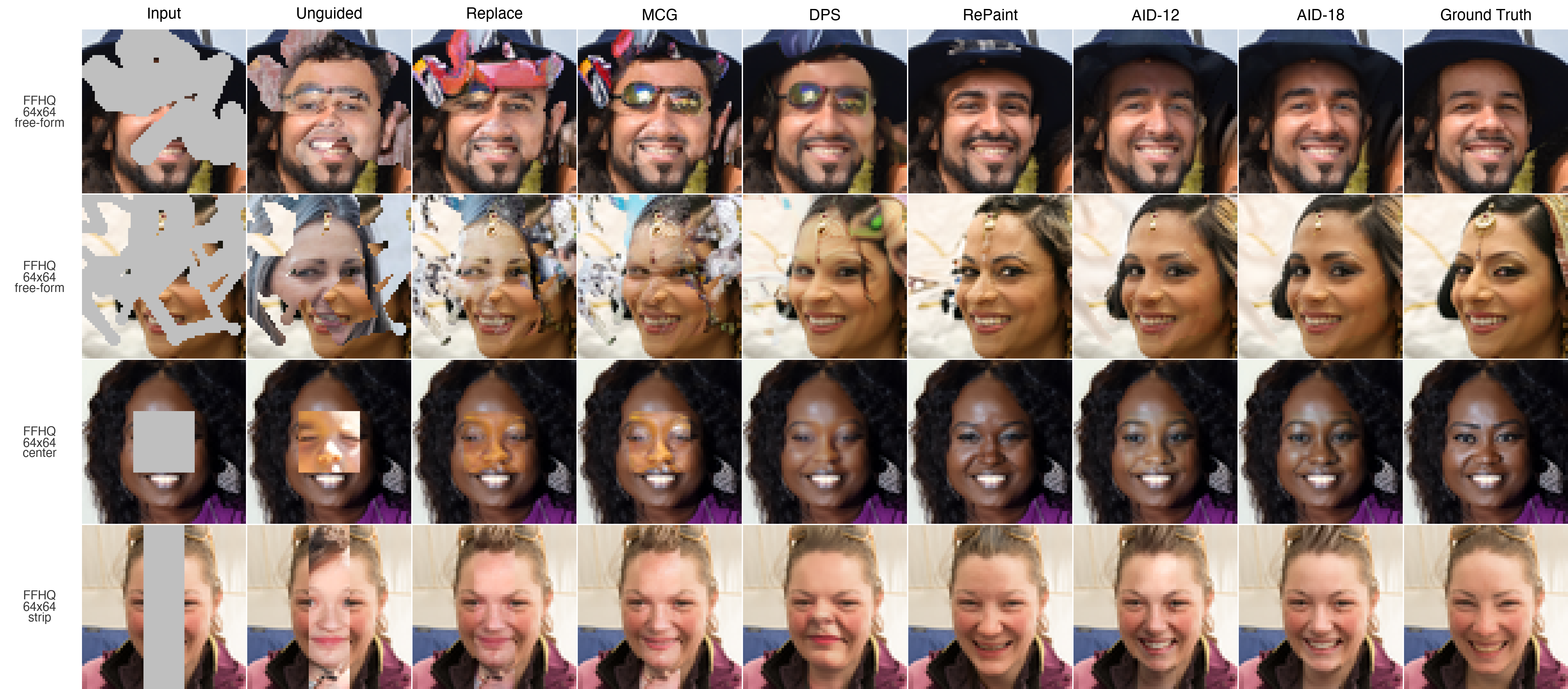}
\caption{Additional FFHQ qualitative comparisons across mask types. Columns follow the same method ordering as Figure~\ref{fig:qualitative_edm}.}
\label{fig:appendix_qual_ffhq}
\end{figure}

\begin{figure}[H]
\centering
\includegraphics[width=\linewidth]{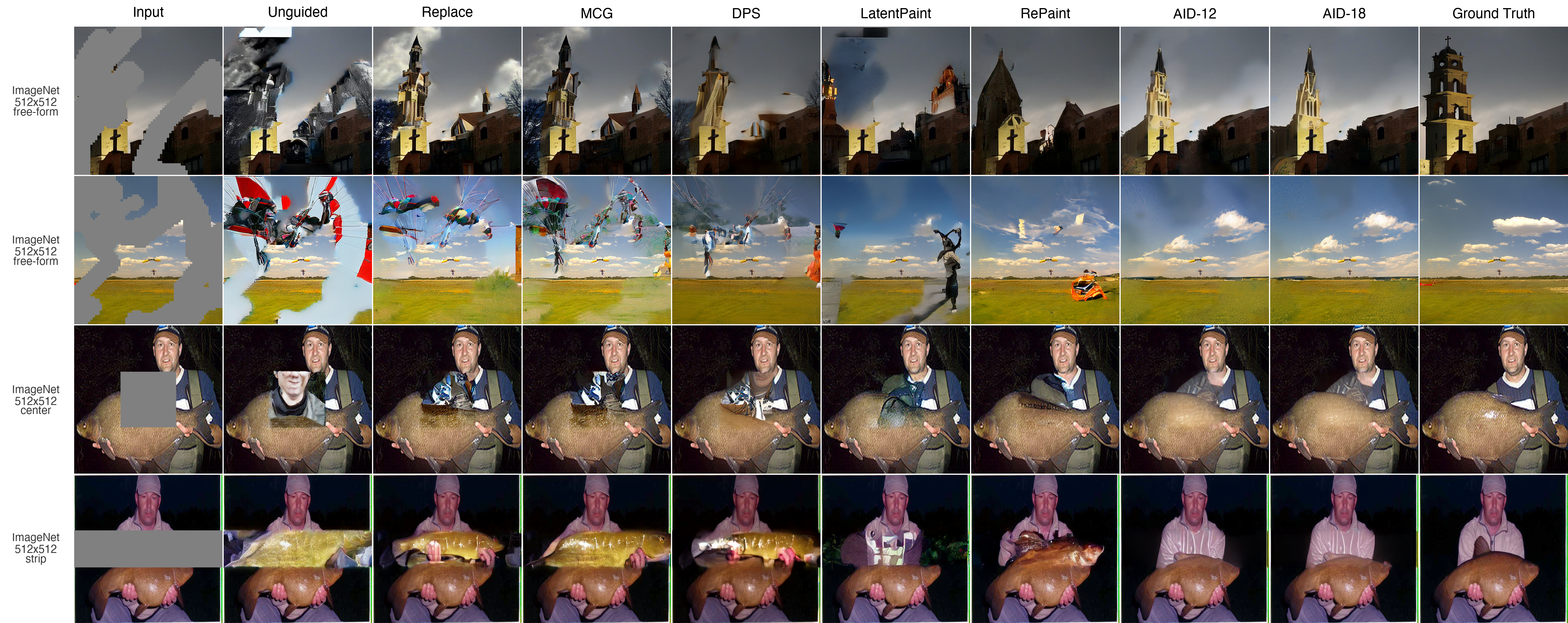}
\caption{Additional ImageNet qualitative comparisons across mask types. Columns follow the same method ordering as Figure~\ref{fig:qualitative_edm2}.}
\label{fig:appendix_qual_imagenet}
\end{figure}

\section{Parameter Overhead}
\label{sec:overhead}
\begin{figure}[H]
\centering
\includegraphics[width=0.9\linewidth]{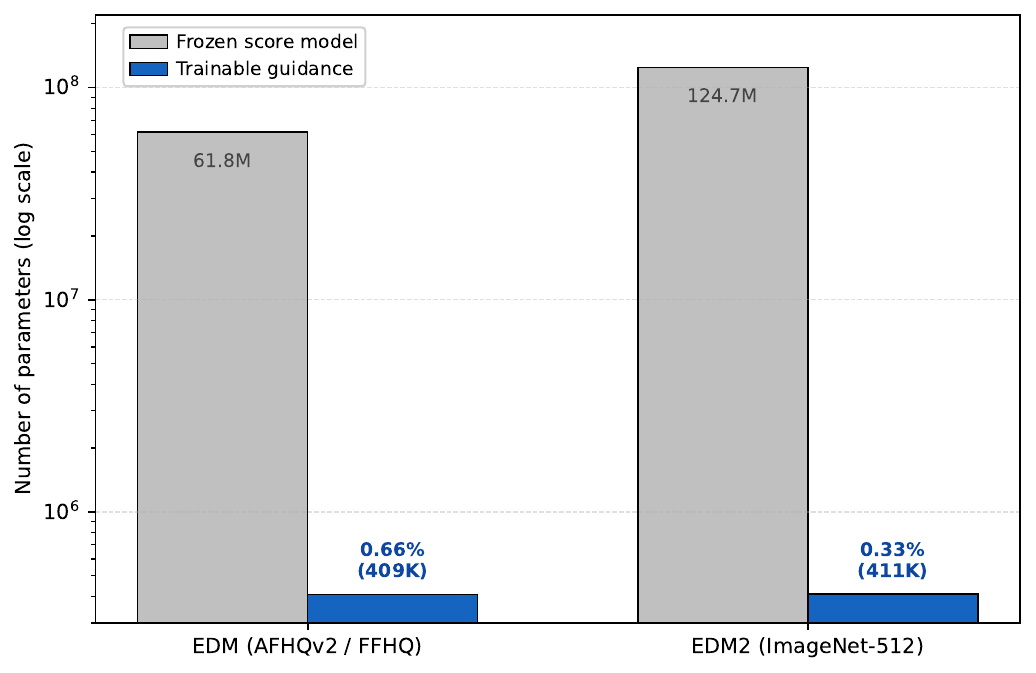}
\caption{Trainable parameter overhead relative to the frozen score model. AID remains below one percent of the score network in both backbone families, which is consistent with the deployment goal of learning a small reusable correction on top of a pretrained model rather than retraining the generative model itself.}
\label{fig:overhead_appendix}
\end{figure}

\section{Training Details}
\label{sec:training_details}

We summarize the training protocol used to produce the AID checkpoints in the main paper.

\paragraph{Network architecture.}
The actor \(\hat\mu^\phi\) and critic \(\hat V^\theta\) are two separate convolutional networks with no shared parameters. Both take \((t,x_t,\xi)\) and process \(\xi=(M,y)\) through a small conditioning stem (two residual blocks at width~\(32\)) whose output is concatenated with the state \(x_t\) and a sinusoidal \(\log\sigma(t)\) embedding; a four-block residual trunk at width~\(64\) is then applied. The actor projects the trunk output back to the state dimension; the critic spatially averages it and maps to a scalar. On AFHQv2 and FFHQ both networks operate in pixel space; on ImageNet they operate in the \(4\)-channel \(64\times 64\) EDM2 latent space. For ImageNet, masks and observations are represented on the EDM2 latent grid during sampling; completed latents are decoded to pixel space for PSNR, SSIM, and LPIPS evaluation using the corresponding pixel-space masks. The critic is used only at training time and is discarded at deployment. The actor has about \(409\)K trainable parameters on EDM and \(411\)K on EDM2, corresponding to \(0.66\%\) and \(0.33\%\) of the respective frozen score networks.

\paragraph{Frozen score backbones.}
We use the official pretrained pickles: EDM on AFHQv2 and FFHQ, and EDM2 on ImageNet \citep{karras2022elucidating,karras2024analyzing}. All score networks are kept frozen throughout training and deployment. For class-conditional ImageNet EDM2, we use the dataset labels associated with the sampled images.

\paragraph{Solver and time grid.}
The controlled reverse ODE~\eqref{eq:controlled} is discretized with Heun on the EDM \(\sigma\)-schedule (\(\rho=7\), \(\sigma_{\min}=0.002\), \(\sigma_{\max}=80\)). Training uses \(K=18\) steps; at deployment we consider \(K\in\{12,18\}\) with the same trained actor. Following the EDM convention, the final solver step is Euler, so the reported NFE is \(2K-1\): \(35\) for \(K=18\) and \(23\) for \(K=12\).

\paragraph{Task distribution.}
During training we sample \(\zeta=(M,y,x^\dagger)\sim\rho_{\mathrm{task}}\) online, drawing \(x^\dagger\) from the dataset's training split and \(M\) from a random free-form brush-mask distribution with total missing fraction uniform in \([0.2,0.6]\); the center and strip masks used for evaluation in Appendix~\ref{sec:additional_results} are never seen during training.

\paragraph{Optimizer.}
We use Adam for both networks with constant learning rate \(10^{-4}\), batch size~\(2\), gradient clipping at norm~\(1\), and no scheduler or EMA. Each run takes on the order of several hours on a single commodity GPU-equivalent device; detailed settings per dataset are provided in the supplementary configuration files.

\paragraph{Hyperparameter values (\(\alpha_{\mathrm{vis}},\alpha_{\mathrm{hole}},\beta,\lambda\)).}
Table~\ref{tab:training_hparams} lists the numerical values used for the released checkpoints. The control weight \(\beta\) is fixed to \(10^{-3}\) across all datasets. The exploration temperature is fixed to \(\lambda=10^{-3}\). We did not tune these hyperparameters carefully; the sensitivity study in Appendix~\ref{sec:ablation} confirms that moderate deviations do not change the ranking against the baselines of Table~\ref{tab:main_results}.

\begin{table}[H]
\centering
\caption{Training hyperparameters used for the released AID checkpoints.}
\label{tab:training_hparams}
\begin{tabular}{lcccc}
\toprule
Dataset / backbone & \(\alpha_{\mathrm{vis}}\) & \(\alpha_{\mathrm{hole}}\) & \(\beta\) & \(\lambda\) \\
\midrule
AFHQv2 / EDM        & 2 & 1 & \(10^{-3}\) & \(10^{-3}\) \\
FFHQ / EDM          & 2 & 1 & \(10^{-3}\) & \(10^{-3}\) \\
ImageNet / EDM2 & 1 & 1 & \(10^{-3}\) & \(10^{-3}\) \\
\bottomrule
\end{tabular}
\end{table}

\paragraph{Reproducibility.}
All training and evaluation runs use three random seeds whose specific values are fixed in the supplementary code. The complete training and evaluation configuration for each dataset is provided in the supplementary materials, and all results are fully reproducible.

\section{Ablation Study on AFHQv2}
\label{sec:ablation}

We ablate the four hyperparameters of the AID training objective: the control weight \(\beta\), the exploration temperature \(\lambda\), and the terminal weights \(\alpha_{\mathrm{vis}}\) and \(\alpha_{\mathrm{hole}}\). All ablations are conducted on AFHQv2 with the free-form mask (\(1000\) test images, same protocol as Table~\ref{tab:main_results}); the deployment setting is AID (\(K=18\)) throughout. In each table only the studied hyperparameter is varied, with perturbations at \(\times 2\) and \(\div 2\) of its default value; the remaining hyperparameters are held at the defaults of Table~\ref{tab:training_hparams}. For simplicity, we report only the resulting scalar metric values. The default row is consistent with the main AFHQv2 free-form result in Table~\ref{tab:main_results}. Throughout the four ablations, AID remains clearly ahead of the strongest baseline, RePaint (AFHQv2 free-form: \(10.97/0.6816/0.0879\)), on all three metrics.

\paragraph{Effect of the control weight \(\beta\).}
Table~\ref{tab:ablation_beta} varies \(\beta\) by a factor of two around the default \(\beta=10^{-3}\). A smaller \(\beta\) weakens the quadratic penalty on the guidance field and slightly softens all three metrics; a larger \(\beta\) regularizes more strongly and damps the guidance magnitude. Both directions produce only small shifts, indicating that the method is not highly sensitive to \(\beta\) within this range.

\begin{table}[H]
\centering
\caption{Ablation of the control weight \(\beta\) on AFHQv2 free-form (\(\alpha_{\mathrm{vis}}=2\), \(\alpha_{\mathrm{hole}}=1\), \(\lambda=10^{-3}\)).}
\label{tab:ablation_beta}
\begin{tabular}{lccc}
\toprule
\(\beta\) & PSNR\,$\uparrow$ & SSIM\,$\uparrow$ & LPIPS\,$\downarrow$ \\
\midrule
\(5\times 10^{-4}\)     & 12.94 & 0.7321 & 0.0780 \\
\(10^{-3}\) (default)   & \textbf{13.01} & \textbf{0.7336} & \textbf{0.0772} \\
\(2\times 10^{-3}\)     & 12.89 & 0.7306 & 0.0792 \\
\bottomrule
\end{tabular}
\end{table}

\paragraph{Effect of the exploration temperature \(\lambda\).}
Table~\ref{tab:ablation_lambda} varies \(\lambda\) with all other quantities fixed. Through the bridge of Section~\ref{sec:bridge}, \(\lambda\) enters the auxiliary policy variance \(2\lambda/(\beta d)\) and controls the amount of stochastic exploration used during training. A smaller \(\lambda\) reduces exploration, while a larger \(\lambda\) injects additional training-time noise; in both cases the actor remains stable, consistent with the theoretical prediction that the deterministic optimum \(u^*\) is recovered for any \(\lambda>0\).

\begin{table}[H]
\centering
\caption{Ablation of the exploration temperature \(\lambda\) on AFHQv2 free-form (\(\beta=10^{-3}\), \(\alpha_{\mathrm{vis}}=2\), \(\alpha_{\mathrm{hole}}=1\)).}
\label{tab:ablation_lambda}
\begin{tabular}{lccc}
\toprule
\(\lambda\) & PSNR\,$\uparrow$ & SSIM\,$\uparrow$ & LPIPS\,$\downarrow$ \\
\midrule
\(5\times 10^{-4}\)    & 12.97 & 0.7328 & 0.0776 \\
\(10^{-3}\) (default)  & \textbf{13.01} & \textbf{0.7336} & \textbf{0.0772} \\
\(2\times 10^{-3}\)    & 12.96 & 0.7325 & 0.0779 \\
\bottomrule
\end{tabular}
\end{table}

\paragraph{Effect of the visible-region weight \(\alpha_{\mathrm{vis}}\).}
Table~\ref{tab:ablation_alpha_vis} varies \(\alpha_{\mathrm{vis}}\) at \(\times 2\) and \(\div 2\) of the default \(\alpha_{\mathrm{vis}}=2\), with \(\alpha_{\mathrm{hole}}=1\) held fixed. Reducing \(\alpha_{\mathrm{vis}}\) weakens the visible-region supervision and softens PSNR, while doubling it gives a small improvement in PSNR with only minor changes in SSIM and LPIPS. The controller is therefore only mildly sensitive to \(\alpha_{\mathrm{vis}}\) within this range.

\begin{table}[H]
\centering
\caption{Ablation of the visible-region weight \(\alpha_{\mathrm{vis}}\) on AFHQv2 free-form (\(\alpha_{\mathrm{hole}}=1\), \(\beta=10^{-3}\), \(\lambda=10^{-3}\)).}
\label{tab:ablation_alpha_vis}
\begin{tabular}{lccc}
\toprule
\(\alpha_{\mathrm{vis}}\) & PSNR\,$\uparrow$ & SSIM\,$\uparrow$ & LPIPS\,$\downarrow$ \\
\midrule
\(1\)               & 12.82 & 0.7288 & 0.0800 \\
\(2\) (default)     & 13.01 & \textbf{0.7336} & \textbf{0.0772} \\
\(4\)               & \textbf{13.04} & 0.7330 & 0.0778 \\
\bottomrule
\end{tabular}
\end{table}

\paragraph{Effect of the hole-region weight \(\alpha_{\mathrm{hole}}\).}
Table~\ref{tab:ablation_alpha_hole} varies \(\alpha_{\mathrm{hole}}\) at \(\times 2\) and \(\div 2\) of the default \(\alpha_{\mathrm{hole}}=1\), with \(\alpha_{\mathrm{vis}}=2\) held fixed. Doubling \(\alpha_{\mathrm{hole}}\) slightly improves all three metrics, while halving it weakens missing-region supervision and produces the expected small regression.

\begin{table}[H]
\centering
\caption{Ablation of the hole-region weight \(\alpha_{\mathrm{hole}}\) on AFHQv2 free-form (\(\alpha_{\mathrm{vis}}=2\), \(\beta=10^{-3}\), \(\lambda=10^{-3}\)).}
\label{tab:ablation_alpha_hole}
\begin{tabular}{lccc}
\toprule
\(\alpha_{\mathrm{hole}}\) & PSNR\,$\uparrow$ & SSIM\,$\uparrow$ & LPIPS\,$\downarrow$ \\
\midrule
\(0.5\)             & 12.88 & 0.7307 & 0.0790 \\
\(1\) (default)     & 13.01 & 0.7336 & 0.0772 \\
\(2\)               & \textbf{13.06} & \textbf{0.7342} & \textbf{0.0770} \\
\bottomrule
\end{tabular}
\end{table}

\paragraph{Takeaway.}
Across all four hyperparameters, AID is only weakly sensitive within the \(\times 2\)/\(\div 2\) ranges considered, and in every perturbed configuration remains clearly ahead of RePaint, the strongest baseline, on AFHQv2 free-form. Several perturbations already match or slightly improve on the default row, which suggests that the method is robust to moderate changes in these training hyperparameters.

\end{document}